\definecolor{dark-blue}{rgb}{0,0,0.7}
\DeclareMathOperator*{\argmax}{arg\,max}
\newtheoremstyle{compact}
  {2mm} %
  {1mm} %
  {\em} %
  {} %
  {\bfseries} %
  {.} %
  {.5em} %
  {} %
\theoremstyle{compact}
\newtheorem{definition}{Definition}
\newtheorem{lemma}{Lemma}%
\newtheorem*{lemma*}{Lemma}%
\newtheoremstyle{named}{2mm}{1mm}{\em}{}{\bfseries}{.}{.5em}{\thmnote{#3}}
\theoremstyle{named}
\newtheorem*{namedtheorem}{Theorem}
\title{How Ensembles of Distilled Policies Improve Generalisation in Reinforcement Learning}
\author{
  Max Weltevrede \\
  Delft University of Technology \\
  Delft, The Netherlands \\
  \texttt{m.r.weltevrede@tudelft.nl}
   \And
   Moritz A. Zanger \\
  Delft University of Technology \\
  Delft, The Netherlands \\
  \texttt{m.a.zanger@tudelft.nl}
   \AND
   Matthijs T. J. Spaan \\
  Delft University of Technology \\
  Delft, The Netherlands \\
  \texttt{m.t.j.spaan@tudelft.nl}
   \And
   Wendelin B\"ohmer \\
  Delft University of Technology \\
  Delft, The Netherlands \\
  \texttt{j.w.bohmer@tudelft.nl}
  }
\begin{document}

\maketitle

\begin{abstract}
In the zero-shot policy transfer setting in reinforcement learning, the goal is to train an agent on a fixed set of training environments so that it can generalise to similar, but unseen, testing environments. Previous work has shown that policy distillation after training can sometimes produce a policy that outperforms the original in the testing environments. However, it is not yet entirely clear why that is, or what data should be used to distil the policy. In this paper, we prove, under certain assumptions, a generalisation bound for policy distillation after training. The theory provides two practical insights: for improved generalisation, you should 1) train an ensemble of distilled policies, and 2) distil it on as much data from the training environments as possible. We empirically verify that these insights hold in more general settings, when the assumptions required for the theory no longer hold. Finally, we demonstrate that an ensemble of policies distilled on a diverse dataset can generalise significantly better than the original agent. 
\end{abstract}

\section{Introduction}
A major challenge for developing reliable reinforcement learning (RL) agents is their ability to generalise to new scenarios they did not encounter during training. The zero-shot policy transfer setting \citep[ZSPT, ][]{kirk_survey_2023} tests for this ability by having an agent train on a fixed set of training environments, referred to as training \emph{contexts}, and measuring the agent's performance on a held-out set of similar, but different, testing contexts. Previous work has identified that \emph{policy distillation} after training, the act of transferring knowledge from the agent's policy into a freshly initialised neural network, can be used as a tool for generalisation. In particular, it has been shown that the distilled policy sometimes achieves higher test performance than the original policy \citep{lyle_learning_2022}.

However, it is not yet entirely clear \emph{how} policy distillation after training can improve generalisation performance in RL. \citet{lyle_learning_2022} theoretically show that the temporal difference (TD) loss negatively affects the smoothness of the learned value function, which only indirectly explains why policy distillation after training (without TD loss) can improve generalisation. They also partially attribute the observed generalisation benefits to the stationarity of the distillation targets, which avoids negative effects induced by the non-stationary RL targets during training \citep{igl_transient_2021}, but this lacks a solid theoretical justification. Moreover, recent work has shown that not only the stationarity of the RL training distribution, but also its overall diversity can affect generalisation to unseen contexts \citep{jiang_importance_2023,suau_bad_2024, weltevrede_exploration_2025}. This additionally raises the question whether we can increase generalisation performance by changing the distribution of data on which the policy is distilled.

In this paper, we theoretically analyse the act of distilling a policy after training, and try to answer \emph{how} the policy should be distilled and on \emph{what} data. Our analysis is based on the idea that many real-world data distributions exhibit symmetries, and that generalising to novel inputs will require being invariant to those symmetries. Although there is a lot of empirical evidence that demonstrates neural networks can learn invariances from data in a wide variety of settings and applications \citep{shorten_survey_2019, feng_survey_2021, zhang_understanding_2021}, proving this often requires stricter assumptions. Therefore, we analyse policy distillation in a \emph{generalisation through invariance} ZSPT (GTI-ZSPT) setting, in which the agent has to learn invariance to a symmetry group, whilst only observing a subgroup of those symmetries during training. For this setting, we prove a generalisation bound for a distilled policy, and deduce insights that should translate beyond the strict group theoretical framework required for the theory. 

Specifically, our theoretical results lead to two practical insights: generalisation performance can be improved by 1) training an ensemble of distilled policies, and 2) distilling on a diverse set of states. Training an ensemble can be very costly. However, we demonstrate that generalisation can be improved (at only a fraction of the sample cost required for training the RL agent), by instead creating an ensemble \emph{after} training by distilling the agent several times and averaging the resulting policy. Finally, related to our work on policy distillation, recent work has suggested that the generalisation performance of behaviour cloning (BC) is competitive with state-of-the-art offline RL in the ZSPT setting \citep{mediratta_generalization_2024}. We demonstrate the insights for policy distillation also transfer to the BC setting and produce better generalising behaviour cloned policies. Our contributions are:
\begin{itemize}[leftmargin=20px]
    \item Given a policy (for example, an RL agent after training), we prove a bound on the test performance for a distilled policy in the GTI-ZSPT setting. This bound is improved by 1) distilling a larger ensemble of policies, and 2) distillation over a more diverse set of states.  
    
    \item Inspired by the theoretical results, we empirically show that the insights gained from the theory improve generalisation of behaviour cloned and distilled policies in more general settings, when the strict assumptions required for the theory no longer hold. Furthermore, we demonstrate that an ensemble of policies distilled on a diverse dataset can generalise significantly better than the original RL agent. 
\end{itemize}

\section{Background}
The goal in reinforcement learning is to optimise a decision-making process, usually formalised as a Markov decision-making process (MDP) defined by the 6 tuple $\mathcal{M} = ( S, A, T, R, p_0, \gamma)$. In this tuple, $S$ denotes the state space, $A$ the action space, $T: S \times A \to \mathcal{P}(S)$ the transition model, $R: S \times A \to \mathbb{R}$ the reward function, $p_0: \mathcal{P}(S)$ the initial state distribution and $\gamma \in [0,1]$ the discount factor, where $\mathcal{P}(S)$ denotes the probability function over state space $S$. Optimising an MDP corresponds to finding the policy $\pi: S \to \mathcal{P}(A)$ that maximises the return (the expected discounted sum of rewards) $J^\pi = \mathbb{E}_\pi [\sum_{t=0}^\infty \gamma^t r_t ]$. The expectation here is over the Markov chain $\{s_0, a_0, r_0, s_1, a_1, r_1, ... \}$ induced by following policy $\pi$ in MDP $\mathcal{M}$ \citep{akshay_steady-state_2013}. The optimal policy $\pi^* = \argmax_\pi \mathbb{E}_\pi [\sum_{t=0}^\infty \gamma^t r_t ]$ is the policy that maximises this return. The \emph{on-policy} distribution $\rho^\pi_{\mathcal{M}}: \mathcal{P}(S)$ is the distribution over the states that a policy $\pi$ would visit in an MDP $\mathcal{M}$. 

A contextual Markov decision-making process (CMDP) $\mathcal{M}|_C$ \citep{hallak_contextual_2015} is an MDP where the state space $S = S' \times C$ can be structured as an outer product of a context space $C$ and underlying state space $S'$. A context $c \in C$ is sampled at the start of an episode and does not change thereafter. The context is part of the state and can influence the transitions and rewards. As such, it can be thought of as defining a task or specific environment that the agent has to solve in that episode. In the \emph{zero-shot policy transfer} (ZSPT) setting \citep{kirk_survey_2023}, an agent gets to train on a fixed subset of contexts $C_{train} \subset C$ and has to generalise to a distinct set of testing contexts $C_{test} \subset C, C_{train} \cap C_{test} =\varnothing$. In other words, the agent gets to interact and train in the CMDP $\mathcal{M}|_{C_{train}}$ (the CMDP induced by the training contexts $C_{train}$), but has to maximise return in the testing CMDP $\mathcal{M}|_{C_{test}}$. 

\subsection{Policy distillation}
\label{sec:background-distil}
In policy distillation, a knowledge transfer occurs by distilling a policy from a \emph{teacher} network into a newly initialised \emph{student} network. There are many different ways the policy can be distilled \citep{czarnecki_distilling_2019}, but in this paper we consider a student network that is distilled on a fixed dataset, that is collected after training, and usually (but not necessarily) consists of on-policy data collected by the teacher. For analysis, we simplify the setting by assuming a deterministic, scalar student  and teacher policy $\pi_\theta: S \to \mathbb{R}, \pi_\beta: S \to \mathbb{R}$. The distillation loss we consider is simply the mean squared error (MSE) between the output of the two policies:
\begin{equation}
\label{eq:distillation}
     l_{D}(\theta, \mathcal{D}, \pi_\beta) = \frac{1}{n} \sum_{s \in \mathcal{D}} (\pi_\theta(s) - \pi_\beta(s) )^2 
\end{equation}
where $\mathcal{D} = \{ s_1, ..., s_n \}$ is the set of states we distil on. This simplified distillation setting is only used for the theoretical results, our experiments in Section \ref{sec:experiments} consider more general settings. Note, we consider \emph{behaviour cloning} (BC) as a specific instance of policy distillation, where the student network only has access to a fixed dataset of the teacher's behaviour (state-action tuples).  For more on behaviour cloning, distillation and their differences, we refer to Appendix \ref{app:background-distil}. 

If we assume a certain smoothness of the transitions, rewards and policies (in particular, Lipschitz continuous MDP and policies), it is possible to bound the performance difference between the student and an optimal policy \citep[Theorem 3]{maran_tight_2023}:
\begin{namedtheorem}[Theorem 3]
\label{thrm:perf-bound-background}
        Let $\pi^*$ be the optimal policy and $\pi_\theta$ be the student policy. If the MDP is $(L_T, L_R)$-Lipschitz continuous and the optimal and student policies are $L_\pi$-Lipschitz continuous, and we have that $\gamma L_T (1 + L_{\pi}) < 1$, then it holds that:
        \begin{equation*}
            J^{\pi^*} - J^{\pi_\theta} \leq \frac{L_{R}}{(1-\gamma) (1 - \gamma L_T(1+L_{\pi_\theta}))} \mathbb{E}_{s \sim d^{\pi^*}} [\mathcal{W}(\pi^*(\cdot|s), \pi_\theta(\cdot|s))]
        \end{equation*}
        where $d^{\pi^*}(s) = (1-\gamma) \sum_{t=0}^\infty \gamma^t \mathbb{P}(s_t=s|\pi^*, p_0)$ the $\gamma$-discounted visitation distribution.
\end{namedtheorem}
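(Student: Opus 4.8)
The plan is to combine the \emph{performance difference lemma} with a Lipschitz‑continuity bound on the student's action‑value function and Kantorovich--Rubinstein duality. First I would write, via the performance difference lemma,
\[
J^{\pi^*} - J^{\pi_\theta} = \frac{1}{1-\gamma}\,\mathbb{E}_{s\sim d^{\pi^*}}\,\mathbb{E}_{a\sim\pi^*(\cdot|s)}\big[A^{\pi_\theta}(s,a)\big],
\]
where $A^{\pi_\theta}(s,a) = Q^{\pi_\theta}(s,a) - V^{\pi_\theta}(s)$ is the advantage of the student policy. Since $V^{\pi_\theta}(s) = \mathbb{E}_{a\sim\pi_\theta(\cdot|s)}[Q^{\pi_\theta}(s,a)]$, the inner expectation is a difference of expectations of the \emph{same} function $Q^{\pi_\theta}(s,\cdot)$ under the two action distributions $\pi^*(\cdot|s)$ and $\pi_\theta(\cdot|s)$; bounding the signed expectation over $s$ by its absolute value pointwise, it therefore suffices to control each $|\mathbb{E}_{a\sim\pi^*(\cdot|s)}[Q^{\pi_\theta}(s,a)] - \mathbb{E}_{a\sim\pi_\theta(\cdot|s)}[Q^{\pi_\theta}(s,a)]|$ by the Lipschitz constant of $Q^{\pi_\theta}$ in the action argument times the Wasserstein distance between the two action distributions.

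The key technical step is to show that $Q^{\pi_\theta}$ is $L_Q$‑Lipschitz with $L_Q \le \frac{L_R}{1 - \gamma L_T(1 + L_{\pi_\theta})}$. I would do this by the standard contraction argument on the Bellman operator: writing $Q^{\pi_\theta}(s,a) = R(s,a) + \gamma\,\mathbb{E}_{s'\sim T(\cdot|s,a)}[V^{\pi_\theta}(s')]$ with $V^{\pi_\theta}(s) = \mathbb{E}_{a\sim\pi_\theta(\cdot|s)}[Q^{\pi_\theta}(s,a)]$, one first bounds $V^{\pi_\theta}$ by $L_V \le L_Q(1 + L_{\pi_\theta})$ — split $V^{\pi_\theta}(s) - V^{\pi_\theta}(s')$ into a change of the integrand ($\le L_Q$ via state‑Lipschitzness of $Q^{\pi_\theta}$) plus a change of the measure ($\le L_Q\,\mathcal{W}(\pi_\theta(\cdot|s),\pi_\theta(\cdot|s')) \le L_Q L_{\pi_\theta}$ via the $L_{\pi_\theta}$‑Lipschitzness of $\pi_\theta$). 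Then, using $(L_T,L_R)$‑Lipschitzness of the MDP, $|Q^{\pi_\theta}(s,a) - Q^{\pi_\theta}(s',a')| \le (L_R + \gamma L_V L_T)\,d\big((s,a),(s',a')\big)$, which gives the self‑consistent inequality $L_Q \le L_R + \gamma L_T(1+L_{\pi_\theta})L_Q$; solving yields the claimed constant, and the hypothesis $\gamma L_T(1+L_{\pi_\theta}) < 1$ is exactly what makes the denominator positive (equivalently, what makes the Bellman operator a contraction on the space of Lipschitz functions so $Q^{\pi_\theta}$ is a well‑defined fixed point with finite Lipschitz constant).

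Finally, since $Q^{\pi_\theta}(s,\cdot)$ is $L_Q$‑Lipschitz, Kantorovich--Rubinstein duality gives
\[
\big|\mathbb{E}_{a\sim\pi^*(\cdot|s)}[Q^{\pi_\theta}(s,a)] - \mathbb{E}_{a\sim\pi_\theta(\cdot|s)}[Q^{\pi_\theta}(s,a)]\big| \le L_Q\,\mathcal{W}\big(\pi^*(\cdot|s),\pi_\theta(\cdot|s)\big),
\]
and substituting this together with the $\frac{1}{1-\gamma}$ prefactor from the performance difference lemma and the bound on $L_Q$ reproduces the stated inequality. I expect the main obstacle to be making the recursive Lipschitz argument for $Q^{\pi_\theta}$ fully rigorous: one must check that $Q^{\pi_\theta}$ genuinely belongs to the space of Lipschitz functions (so the fixed‑point/induction on Bellman iterates is legitimate), that the product metric on $S\times A$ is the one for which $(L_T,L_R)$ are stated, and that the Wasserstein duality is applied with the correct ground metric on the action space. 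The remaining pieces — the performance difference lemma and the algebra solving for $L_Q$ — are routine.
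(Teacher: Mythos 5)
Your proposal is correct and follows essentially the same route as the paper's proof: the performance difference lemma, Kantorovich--Rubinstein duality to convert the inner advantage expectation into $L_{Q^{\pi_\theta}}\,\mathcal{W}(\pi^*(\cdot|s),\pi_\theta(\cdot|s))$, and the bound $L_{Q^{\pi_\theta}} \le \frac{L_R}{1-\gamma L_T(1+L_{\pi_\theta})}$. The only difference is that you sketch the Lipschitz bound on $Q^{\pi_\theta}$ yourself via the self-consistent Bellman argument, whereas the paper simply cites \citet{rachelson_locality_2010} for that step; your sketch is the standard argument and matches the cited result.
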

\begin{proof}
See Appendix \ref{app:proof_maran} for the proof and exact definitions of all the terms.
\end{proof}
In other words, under these conditions the return of a student policy can be bounded by the distance between the student and optimal policies along the states visited by the optimal policy. 

\subsection{Symmetry groups}
To formalise the notion of symmetries and invariance of a function $f: X \to Y$, it is useful to define a \emph{symmetry group} $G$. A group is a non-empty set $G$ together with a binary operation $\cdot$ that satisfies certain requirements such as closure, associativity and always containing an inverse and identity element.\footnote{In this paper, we abuse notation slightly by denoting both the group and the non-empty set with $G$, depending on context.} A group and its elements are abstract mathematical notions. In order to apply elements from a group to a vector space $X$, we need to define a \emph{group representation} $\psi_X$ that maps group elements to invertible matrices. In this paper, we always assume the representations are orthogonal (i.e. $\psi_X(g^{-1}) = \psi_X(g)^\top$). Note that we only need to define the representations $\psi_X$ for the analysis, as they are part of the generalisation bound but are not explicitly defined for the experiments. 

We can define the invariance of a function $f$ as
\begin{equation}
\label{def:invariance}
    f(\psi_X(g) x) = f(x) \quad \forall x \in X, g \in G. 
\end{equation}

A subset $B$ of $G$ is called a \emph{subgroup} of $G$ (notation $B \leq G$) if the members of $B$ form a group themselves. Any group $G$ has at least one subgroup, the \emph{trivial subgroup}, consisting of only the identity element $e: e \circ g = g \circ e = g, \: \forall g \in G$. A subgroup $B \le G$ is finite if it has a finite number of elements. For more on group theory, we refer to Appendix \ref{app:background-symm}. 

\textbf{Example} The group $SO(2)$ consists of the set of all rotations in two dimensions. If the input to function $f$ consists of Euclidean coordinates $(x,y)$, the group representation $\psi_X$ maps a rotation of $\alpha$ degrees to the 2D rotation matrix associated with an $\alpha$ degree rotation.  The function $f$ would be considered rotationally invariant if $f(\psi_X(\alpha)  x) = f(x), \: \forall x \in \mathbb{R}^2, \alpha \in SO(2)$. An example of a finite subgroup of $SO(2)$ is the group $C_4$ consisting of all $90^\circ$ rotations, or the subgroup consisting of only the identity element ($0^\circ$ rotation). 

One approach to induce a function that is invariant to the symmetry group $G$ is to train it with \emph{data augmentation}. For groups of finite size, it is possible to perform \emph{full data augmentation}, which consists of applying every transformation in $G$, to each element of an original dataset $\mathcal{T} = (\mathcal{X}, \mathcal{Y}) = \{(x,y) \in X,Y \}^n$. The function is then trained on the \emph{augmented} dataset $\mathcal{T}_{G} = \{(\psi_X(g)  x, y) \big| \: \forall (x,y) \in \mathcal{T}, g\in G \}$. In general, training a function with data augmentation does not guarantee it becomes invariant \citep{flinth_optimization_2023}, it instead can become approximately invariant or invariant only on the distribution of data on which it was trained \citep{kvinge_what_2022, lyle_benefits_2020, azulay_why_2019}. However, under certain conditions, the average of an infinitely large ensemble \emph{can} have that guarantee \citep{gerken_emergent_2024, nordenfors_ensembles_2024}.

\subsection{Ensembles and invariance}
Formally, an ensemble consists of multiple neural networks $f_\theta: X \to \mathbb{R}$ with parameters $\theta \sim \mu$ initialised from some distribution $\mu$ and trained on the same dataset $\mathcal{T} = (\mathcal{X}, \mathcal{Y}) = \{(x,y) \in X,Y \}^n$. The output of an infinitely large ensemble $\bar{f}_t(x)$ at training time $t$ is given by the average over the \emph{ensemble members} $f_\theta$: 
$
    \bar{f}_t(x) = \mathbb{E}_{\theta \sim \mu} \big[ f_{\mathcal{L}_t\theta}(x) \big],
$
where $\mathcal{L}_t$ denotes a map from initial parameters $\theta$ to the corresponding parameters after $t$ steps of gradient descent. In practice, the infinite ensemble is approximated with a finite Monte Carlo estimate of the expectation $\hat{f}_t$: 
$
    \bar{f}_t \approx \hat{f}_t = \frac{1}{N} \sum_{i=1}^N f_{\mathcal{L}_t\theta_i}(x),
$
where $\theta_i \sim \mu$ and $N$ is the size of the ensemble.

\subsubsection{Infinite width limit}
\label{sec:background-infinite}
Although there does not yet exist a single comprehensive theoretical framework for how neural networks work, significant progress has been made in the field of deep learning theory in the limit of infinite layer width. In this limit, an infinite ensemble $\bar{f}_t(x)$ trained with MSE loss follows a simple Gaussian distribution that depends on the network architecture and initialisation \citep{jacot_neural_2018, lee_wide_2019}. \citet{gerken_emergent_2024} prove that the infinite ensemble $\bar{f}_t(x)$ trained on the augmented dataset $\mathcal{T}_G$ for some group $G$, satisfies the definition of invariance in equation \eqref{def:invariance}, for any $t$ and any $x$. In other words, an infinitely large ensemble of infinitely wide neural networks, trained with full data augmentation for group $G$, is invariant under transformations from $G$ for any input and at any point during the training process. In our analysis, we use Lemma 6.2 from \citet{gerken_emergent_2024} that bounds the invariance of an infinite ensemble of infinitely wide networks trained with full data augmentation on a finite subgroup $B \leq G$:
\begin{namedtheorem}[Lemma 6.2]
\label{thrm:inv-bound-background}
    Let $\bar{f}_t(x) = \mathbb{E}_{\theta \sim \mu}[ f_{\mathcal{L}_t\theta}(x) ]$ be an infinite ensemble of neural networks with Lipschitz continuous derivatives with respect to the parameters. Define the error $\kappa$ as a measure of discrepancy between representations from the group $G$ and its finite subgroup $B$:
    \begin{equation}
    \kappa = \max_{g \in G} \min_{b \in B} || \psi_X(g) - \psi_X(b)||_{op}
    \end{equation}
    where $||\cdot||_{op}$ denotes the operator norm. The prediction of an infinite ensemble trained with full data augmentation on $B \leq G$ deviates from invariance by
    \begin{equation}
    \big|\bar{f}_t \big(x \big) - \bar{f}_t \big(\psi_X(g) x \big) \big| \leq \kappa \: C(x), \qquad \forall g \in G
    \end{equation}
    for any time $t$. Here $C$ is a function of $x$ independent of $g$. 
\end{namedtheorem}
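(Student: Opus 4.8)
The plan is to split the claim into two ingredients and then combine them in one line. The ingredients are: (i) the infinite, infinitely-wide ensemble trained with \emph{full} data augmentation on the finite subgroup $B$ is \emph{exactly} $B$-invariant, i.e. $\bar f_t(\psi_X(b)x)=\bar f_t(x)$ for every $b\in B$, every input $x$, and every time $t$; and (ii) for fixed $x$ the map $z\mapsto\bar f_t(z)$ is Lipschitz on the sphere/ball of radius $\|x\|$ with a constant $L(\|x\|)$ that does not depend on $g$ (and, with a little more work, not on $t$). Granting these, fix $g\in G$ and choose $b^\star\in\arg\min_{b\in B}\|\psi_X(g)-\psi_X(b)\|_{op}$, so that, using orthogonality of the representations, $\|\psi_X(g)x-\psi_X(b^\star)x\|\le\|\psi_X(g)-\psi_X(b^\star)\|_{op}\,\|x\|\le\kappa\|x\|$ and both $\psi_X(g)x$ and $\psi_X(b^\star)x$ have norm $\|x\|$. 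Then by (i), $\bar f_t(\psi_X(g)x)-\bar f_t(x)=\bar f_t(\psi_X(g)x)-\bar f_t(\psi_X(b^\star)x)$, and by (ii) the right-hand side is bounded in absolute value by $L(\|x\|)\,\kappa\|x\|=:\kappa\,C(x)$ with $C(x)=L(\|x\|)\,\|x\|$, which is manifestly independent of $g$.

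For ingredient (i) I would pass to the infinite-width / NTK description: the ensemble mean $\bar f_t$ trained under gradient flow on the MSE loss is the kernel-regression predictor $\bar f_t(x)=\langle\Theta(x,\mathcal X),v_t\rangle$, where $\mathcal X$ is the \emph{augmented} training input set, $\Theta$ is the network's neural tangent kernel, and $v_t$ is a vector depending only on $t$, $\mathcal X$, and the targets $\mathcal Y$ — the contribution of the random initialisation drops out because it has mean zero under $\theta\sim\mu$. Since the targets are copied across augmented inputs, the augmented dataset $\mathcal T_B=\{(\psi_X(b)x,y)\}$ is closed as a multiset under each $\psi_X(b)$, $b\in B$; and for the architecture class at hand the NTK is invariant under a simultaneous orthogonal change of both of its arguments (it depends on them only through norms and inner products). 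A relabelling argument then shows that permuting $\mathcal X$ by the permutation induced by $\psi_X(b)$ fixes both $\mathcal Y$ and $\Theta(\mathcal X,\mathcal X)$, hence fixes $v_t$, and carries $\Theta(\psi_X(b)x,\mathcal X)$ to $\Theta(x,\mathcal X)$, yielding $\bar f_t\circ\psi_X(b)=\bar f_t$. This is exactly the $\kappa=0$ case (the equivariance theorem of the cited work specialised to invariance), so the natural move is to cite it rather than rederive it.

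Ingredient (ii) is where the hypothesis that the networks have Lipschitz-continuous derivatives with respect to the parameters does its work: it guarantees the NTK exists and is continuous and that the gradient-flow dynamics are well posed, so $z\mapsto\Theta(z,\mathcal X)$ is $C^1$ and hence Lipschitz on the compact ball $\{\,\|z\|\le\|x\|\,\}$ with a finite constant $L(\|x\|)$; because gradient flow on the MSE loss converges (to the ridgeless kernel solution), $\|v_t\|$ stays bounded uniformly in $t$, so $\bar f_t=\langle\Theta(\cdot,\mathcal X),v_t\rangle$ is Lipschitz on that ball uniformly in $t$. Absorbing $\|v_t\|$ into $L(\|x\|)$ and then into $C(x)$ finishes the argument.

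The main obstacle is ingredient (i): making the infinite-width reduction and the orthogonal-invariance of the NTK rigorous requires committing to a concrete architecture and parametrisation (for instance a fully-connected first layer with standard NTK scaling) and carefully handling the mean-zero cancellation of the initialisation term. Since this is precisely the content of the result being quoted, the sensible approach in this paper is to import it as a black box and to spend the actual effort only on the comparatively routine approximation-plus-Lipschitz step (ii), making sure that the resulting $C(x)$ is independent of $g$ and, ideally, uniform in $t$.
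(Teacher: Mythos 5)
Your proposal is correct and rests on the same two pillars as the paper's argument --- closeness of every $g\in G$ to some $b\in B$ in operator norm, and a Lipschitz estimate coming from the Lipschitz-continuous parameter derivatives --- but it packages them differently. You factor the claim through (i) exact $B$-invariance of the infinite ensemble at every input and every time (imported as a black box) and (ii) input-Lipschitzness of $\bar f_t$ on the sphere of radius $\|x\|$, and then compare $\psi_X(g)x$ with the nearest $\psi_X(b^\star)x$. The paper's proof (Appendix \ref{app:proof_invariance}) never isolates exact $B$-invariance as a separate lemma: it writes $\bar f_t$ as the NTK mean $\Theta(\cdot,\mathcal{D}_B)\,\Theta^{-1}(\mathbb{I}-e^{-\eta\Theta t})\,\mathcal{Y}_B$, uses the permutation Lemma 5.2 (full augmentation makes permutations commute with functions of the kernels) together with invariance of the labels to relabel the training indices --- which is exactly where your ingredient (i) is encoded --- and then bounds the resulting kernel-row difference by $|\Theta(s',s)-\Theta(s',\bar s)|\le\|s-\bar s\|\,\hat C(s')$ followed by Cauchy--Schwarz, which is the paper's version of your ingredient (ii). Your decomposition is more modular and makes the role of the exact-invariance theorem transparent; the paper's one-pass kernel computation avoids your extra step of arguing that the Lipschitz constant of $\bar f_t$ is uniform in $t$, since its $g$-independent constant $C_\Theta(x)$ is simply allowed to contain the (bounded) factor involving $\mathbb{I}-e^{-\eta\Theta t}$, which suffices because the statement only demands independence of $g$. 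Two details worth tightening in your sketch: the hypothesis of Lipschitz-continuous parameter derivatives should be used directly, as the paper does, via $\|\partial_{\theta^{(l)}}f_\theta(s)-\partial_{\theta^{(l)}}f_\theta(\bar s)\|\le L(\theta^{(l)})\|s-\bar s\|$ to get Lipschitzness of the kernel in its input, rather than through a ``$C^1$ hence Lipschitz on a compact ball'' argument, which the stated hypothesis does not literally provide; and your concern that ingredient (i) needs the kernel to be invariant under simultaneous orthogonal transformations of its arguments is consistent with the paper's own proof, which uses exactly that property (in the step rewriting $\Theta(\psi_S(g)s,s_i)$ as $\Theta(s,\psi_S^{-1}(g)s_i)$) under the paper's standing assumption that representations are orthogonal, so citing the exact-invariance result rather than rederiving it is indeed the appropriate move.
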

\begin{proof}
See Appendix \ref{app:proof_invariance} for the proof and exact definitions of all the terms.
\end{proof}
This lemma bounds the deviation from invariance of the infinite ensemble by a factor $\kappa$, which is a measure of how well the subgroup $B$ covers the full group $G$, in the space of representations $\psi_X$. For more background on infinite ensembles in the infinitely wide limit, we refer to Appendix \ref{app:background-infinite}.

\section{Related work}
The CMDP framework captures many RL settings focused on zero-shot generalisation \citep{kirk_survey_2023}. Some approaches to improve generalisation focus on learning generalisable functions through inductive biases \citep{kansky_schema_2017, wang_unsupervised_2021} or by applying regularisation techniques from supervised learning \citep{tishby_deep_2015, cobbe_quantifying_2019}. These approaches improve generalisation by changing the RL training process, whereas we distil a teacher policy \emph{after} training, which in principle is agnostic to how that teacher was trained. Other work improves generalisation by increasing the diversity of the data on which the agent trains, for example by increasing the diversity of the training contexts using domain randomisation \citep{tobin_domain_2017, sadeghi_cad2rl_2017}, or creating artificial data using data augmentation \citep{lee_network_2020, raileanu_automatic_2021}. Our work focusses on sampling additional data from a fixed set of training contexts, but differs from data augmentation in that we do not require explicitly designed augmentations. For a broader survey on zero-shot generalisation in reinforcement learning, see \citet{kirk_survey_2023}.

Policy distillation in RL has been used to compress policies, speed up learning, or train multi-task agents by transferring knowledge from teacher policies to student networks \citep{rusu_policy_2016, schmitt_kickstarting_2018, czarnecki_distilling_2019}. Various methods of distillation exist, balancing factors such as teacher quality, access to online data, and availability of teacher value functions or rewards \citep{czarnecki_distilling_2019}. Some studies have used distillation to improve generalisation, either by mitigating RL-specific non-stationarity through periodic distillation \citep{igl_transient_2021} or by distilling from policies trained with privileged information or weak augmentations \citep{fan_secant_2021, walsman_impossibly_2023}. Most similar to our work, \citet{lyle_learning_2022} show a policy distilled after training can sometimes generalise better than the original RL agent. But, their theory only indirectly covers policy distillation and they do not investigate how the distillation data affects generalisation.

\section{Generalisation through invariance}
\label{sec:theory}
In this section, we introduce a specific ZSPT setting that allows us to prove a generalisation bound for a distilled policy. The main idea is that many real-world data distributions exhibit symmetries, and that generalising to novel inputs sampled from this distribution requires (at least partially) being invariant to those symmetries. Moreover, any training dataset sampled IID from this distribution will likely observe some of these symmetries. 

Proving a neural network learns invariances from data is not straightforward, and usually requires assumptions on the mathematical structure of the symmetries. For this reason, we consider a specific setting in which an agent has to become invariant to a symmetry group $G$, but trains with full data augmentation under only a subgroup $B \leq G$. Even though this setting requires strict assumptions, we expect the insights to apply more broadly, as there is a lot of empirical evidence that data augmentation improves generalisation performance in a wide variety of settings and applications \citep{shorten_survey_2019, feng_survey_2021, zhang_understanding_2021, miao_learning_2023}. We formalise the idea in a \emph{generalisation through invariance ZSPT} (GTI-ZSPT)
\begin{restatable}[Generalisation through invariance ZSPT]{definition}{maindef}
\label{def:gti-zspt}
    Let $\mathcal{M}|_{C}$ be a CMDP and let $C_{train}, C_{test} \subset C$ be a set of training and testing contexts that define a ZSPT problem. Additionally, let $\pi^*$ be the optimal policy in $\mathcal{M}|_{C}$, $S^{\pi^*}_{\mathcal{M}|_{C}} = \{s \in S | \rho^{\pi^*}_{\mathcal{M}|_{C}}(s) > 0 \}$ denote the set of states with non-zero support under the on-policy distribution $\rho^{\pi^*}_{\mathcal{M}|_{C}}$ in CMDP $\mathcal{M}|_{C}$. 
    In the generalisation through invariance ZSPT (GTI-ZSPT), the sets $S^{\pi^*}_{\mathcal{M}|_{C}}$ and $S^{\pi^*}_{\mathcal{M}|_{C_{train}}}$ admit a symmetric structure:
    \begin{align*}
    S^{\pi^*}_{\mathcal{M|}_{C}} &= \{ \psi_S(g) s| g \in G, s \in \bar{S} \} \\
    S^{\pi^*}_{\mathcal{M|}_{C_{train}}} &= \{ \psi_S(b) s| b \in B, s \in \bar{S} \}, \quad B \le G
    \end{align*}
    where $\bar{S} \subset S^{\pi^*}_{\mathcal{M|}_{C_{train}}}$ is a proper subset of $S^{\pi^*}_{\mathcal{M|}_{C_{train}}}$ and $G$ is a non-trivial symmetry group (and $B \le G$ a finite subgroup) that leaves the optimal policy invariant: $\pi^*(s) = \pi^*(\psi_S(g)s), \forall s \in \bar{S}$. 
\end{restatable}
To quantify the discrepancy between the group and its subgroup, the following measure is defined \citep{gerken_emergent_2024}:
\begin{restatable}{definition}{subdef}
\label{def:kappa}
    For the group $G$ and its finite subgroup $B \le G$ that define the symmetric structure of a GTI-ZSPT (Definition \ref{def:gti-zspt}), $\kappa$ is a measure of discrepancy between the representations of these groups:
    \begin{equation*}
    \kappa = \max_{g \in G} \min_{b \in B} || \psi_S(g) - \psi_S(b)||_{op}
    \end{equation*}
    where $||\cdot||_{op}$ denotes the operator norm. 
\end{restatable}
The constant $\kappa$ measures how much the subgroup $B \le G$ deviates from the full group $G$. The bigger the subgroup $B$ is, the smaller $\kappa$ will become (with $\kappa = 0$ in the limit of $B = G$).   

\textbf{Example} The 'Reacher with rotational symmetry' ZSPT in Figure \ref{fig:illustrative} satisfies the conditions of the GTI-ZSPT. This is a continuous control environment where the agent has to move a robot arm (blue) in such a way that its hand (black circle) reaches the goal location (green circle). The four training contexts have shoulder locations that are rotated 0, 90, 180 and 270 degrees around the goal location. In testing, the shoulder can be rotated any amount. As an example, the measure $\kappa$ for the subgroup $C_4$ of $90^\circ$ rotations (as depicted in the figure), would be larger than for the bigger subgroup $C_8$ of $45^\circ$ rotations. See Appendix \ref{app:exp-illustrative} for more on this example and how it satisfies the assumptions. 

\begin{figure}[h]
    \centering
    \includegraphics[width=0.8\textwidth]{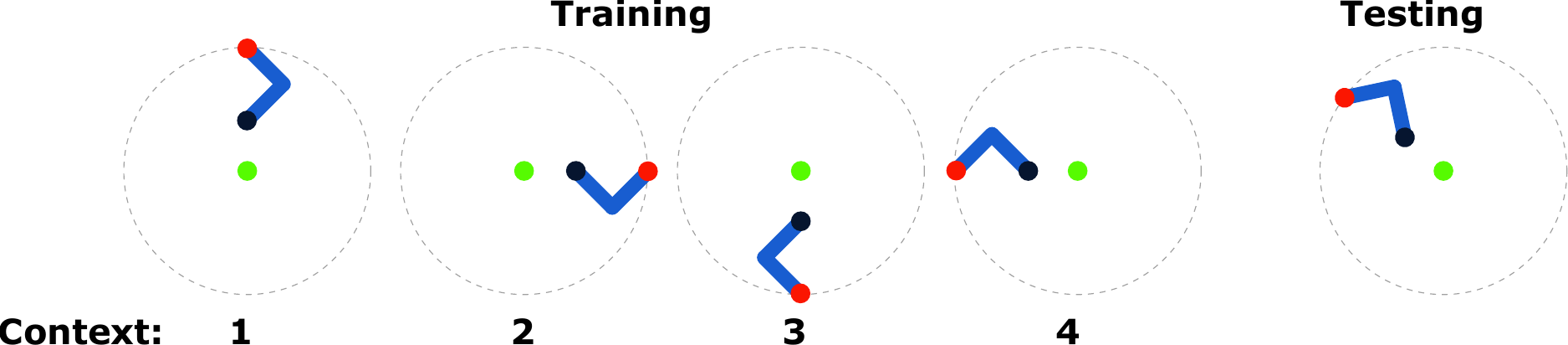}
    \caption{A 'Reacher with rotational symmetry' CMDP with four training contexts, differing in the location of the shoulder (red), positioned along a circle (dotted line). All contexts share the relative pose of the robot arm (blue). The goal is for the hand (black circle) to reach the goal location (green circle) in the middle. The training contexts can be generated by applying the group of $90^\circ$ rotations to context 1, and the testing contexts can be generated with the full group of rotations ($SO(2)$). }
    \label{fig:illustrative}
\end{figure}

\subsection{Bounding the performance}
For the GTI-ZSPT setting, we can bound the performance of a distilled policy in the testing CMDP with the following theorem:
\begin{restatable}{theorem}{MainTheorem}
\label{thrm:main}
    Consider policy distillation for a deterministic, scalar teacher policy $\pi_\beta: S \to \mathbb{R}$ (Equation \eqref{eq:distillation} in Section \ref{sec:background-distil}) in a $L_T, L_R$-Lipschitz continuous CMDP in the GTI-ZSPT setting. Let the student policy $\hat{\pi}_\infty$ be an ensemble of $N$ infinitely wide neural networks $\pi_\theta: S \to \mathbb{R}$ with Lipschitz continuous derivatives with respect to its parameters, distilled on an on-policy dataset $\mathcal{D} = S^{\pi_\beta}_{\mathcal{M|}_{C_{train}}} = \{ \psi_S(b) s| b \in B, s \in \bar{S} \}$ consisting of all the states in the training contexts encountered by the teacher in the GTI-ZSPT setting. Furthermore, let the student policy be $L_{\hat{\pi}_\infty}$-Lipschitz continuous and assume $\gamma L_T (1 + L_{\hat{\pi}_\infty}) < 1$.
    
    If the teacher is optimal in the training tasks $C_{train}$ (but arbitrarily bad anywhere else), the performance of the student in the testing CMDP $\mathcal{M|}_{C_{test}}$ is bounded with probability at least $1-\epsilon$, by:
    \begin{equation}
        J^{\pi^*} - J^{\hat{\pi}_\infty} \le \frac{L_{R}}{(1-\gamma) (1 - \gamma L_T(1+L_{\hat{\pi}_\infty}))} \bigg( \kappa \bar{C}_\Theta + \frac{1}{\sqrt{N}} \bar{C}_{\Sigma_\infty}(\epsilon) \bigg)
        \label{eq:thm1}
    \end{equation}
    where $\kappa$ is the measure of discrepancy between subgroup $B \le G$ and full group $G$ (see definition \ref{def:kappa}) and $\bar{C}_{\Theta}, \bar{C}_{\Sigma_\infty}$ are constants that depend on the $\gamma$-discounted visitation distribution of the optimal policy in $\mathcal{M|}_{C_{test}}$, the network architecture, and the dataset $\mathcal{D}$. Additionally, $\bar{C}_{\Sigma_\infty}$ also depends on the network initialisation and the confidence level $\epsilon$. 
\end{restatable}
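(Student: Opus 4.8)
The plan is to bound the expected pointwise policy discrepancy appearing in Theorem~3 and then substitute. Because $\pi^*$ and $\hat\pi_\infty$ are deterministic and scalar, $\mathcal W(\pi^*(\cdot\mid s),\hat\pi_\infty(\cdot\mid s)) = |\pi^*(s)-\hat\pi_\infty(s)|$; applying Theorem~3 in the testing CMDP $\mathcal M|_{C_{test}}$ (where $\pi^*$ is still optimal, since contexts are decoupled, and the Lipschitz constants are inherited) reduces the claim to showing that, with probability at least $1-\epsilon$,
\begin{equation*}
\mathbb E_{s\sim d^{\pi^*}}\big[\,|\pi^*(s)-\hat\pi_\infty(s)|\,\big]\ \le\ \kappa\,\bar C_\Theta+\tfrac{1}{\sqrt N}\,\bar C_{\Sigma_\infty}(\epsilon),
\end{equation*}
with $d^{\pi^*}$ the $\gamma$-discounted visitation distribution of $\pi^*$ in $\mathcal M|_{C_{test}}$; the prefactor in~\eqref{eq:thm1} is exactly the constant supplied by Theorem~3.

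To bound this expectation, fix $s$ in the support of $d^{\pi^*}$. Since $C_{test}\subset C$, this support lies in $S^{\pi^*}_{\mathcal M|_C}=\{\psi_S(g)\bar s\mid g\in G,\ \bar s\in\bar S\}$, so choose a representative $\bar s\in\bar S$ and $g\in G$ with $s=\psi_S(g)\bar s$, and split
\begin{equation*}
|\pi^*(s)-\hat\pi_\infty(s)|\ \le\ |\pi^*(s)-\bar\pi_\infty(s)|\ +\ |\bar\pi_\infty(s)-\hat\pi_\infty(s)|.
\end{equation*}
For the first (bias) term, invariance of $\pi^*$ gives $\pi^*(s)=\pi^*(\bar s)$, and the triangle inequality produces $|\pi^*(\bar s)-\bar\pi_\infty(\bar s)|+|\bar\pi_\infty(\bar s)-\bar\pi_\infty(\psi_S(g)\bar s)|$. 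The first summand is zero: $B$ contains the identity, so $\bar S\subseteq\mathcal D$; the teacher is optimal on $\mathcal D$, hence equals $\pi^*$ there; and the mean of an infinitely wide MSE-trained ensemble is the NTK--GP posterior mean, which interpolates the noiseless targets on $\mathcal D$, so $\bar\pi_\infty=\pi_\beta=\pi^*$ on $\bar S$. The second summand is at most $\kappa\,C(\bar s)$ by Lemma~6.2, which applies because $\mathcal D=\{\psi_S(b)\bar s\mid b\in B,\ \bar s\in\bar S\}$ is precisely the full data augmentation of $\bar S$ under $B$. Taking expectations, the bias contributes $\kappa\,\mathbb E_{s\sim d^{\pi^*}}[C(\bar s)]=:\kappa\bar C_\Theta$, which depends only on $d^{\pi^*}$, the architecture (through the kernel entering $C$), and $\mathcal D$.

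For the second (variance) term, $\hat\pi_\infty(s)=\tfrac1N\sum_{i=1}^N\pi_{\mathcal L_t\theta_i}(s)$ is an average of $N$ i.i.d.\ draws whose common law, in the infinite-width limit, is Gaussian with mean $\bar\pi_\infty(s)$ and variance $\Sigma_\infty(s,s)$, so $\hat\pi_\infty(s)-\bar\pi_\infty(s)$ has mean $0$ and variance $\Sigma_\infty(s,s)/N$ and $\mathbb E_{\mathrm{ens}}|\hat\pi_\infty(s)-\bar\pi_\infty(s)|=\sqrt{2\Sigma_\infty(s,s)/(\pi N)}$. Letting $Z:=\mathbb E_{s\sim d^{\pi^*}}|\bar\pi_\infty(s)-\hat\pi_\infty(s)|$, Fubini gives $\mathbb E_{\mathrm{ens}}[Z]=\tfrac1{\sqrt N}\sqrt{2/\pi}\,\mathbb E_{s\sim d^{\pi^*}}[\sqrt{\Sigma_\infty(s,s)}]$, and Markov's inequality yields $Z\le\tfrac1{\sqrt N}\bar C_{\Sigma_\infty}(\epsilon)$ with probability $\ge1-\epsilon$, where $\bar C_{\Sigma_\infty}(\epsilon):=\epsilon^{-1}\sqrt{2/\pi}\,\mathbb E_{s\sim d^{\pi^*}}[\sqrt{\Sigma_\infty(s,s)}]$ additionally depends on the initialisation (through $\Sigma_\infty$) and on $\epsilon$. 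Summing the two contributions and substituting into Theorem~3 gives~\eqref{eq:thm1}.

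The step I expect to be most delicate is the handling of the infinite-width limit: one must identify the trained infinite ensemble with its GP description so that exact interpolation on $\mathcal D$ and Gaussianity of the members hold together, and in particular reconcile the ``any time $t$'' form of Lemma~6.2 with the convergence needed for interpolation --- either by distilling to convergence, or by carrying a residual training error that gets folded into $\bar C_\Theta$. The only other non-cosmetic point is exchanging the per-state Gaussian control with the expectation over $d^{\pi^*}$; the Markov argument above is the cheap route and is what injects the $\epsilon^{-1}$ dependence into $\bar C_{\Sigma_\infty}(\epsilon)$. Everything else is triangle-inequality bookkeeping.
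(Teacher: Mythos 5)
Your proposal is correct and follows the same skeleton as the paper's proof: invoke Maran et al.'s Theorem 3 on $\mathcal{M}|_{C_{test}}$ with the Wasserstein distance collapsing to $|\pi^*(s)-\hat{\pi}_\infty(s)|$ for deterministic scalar policies, then split this into an invariance/bias term and a finite-ensemble term, handling the bias exactly as the paper does (the dataset is the full $B$-augmentation of $\bar{S}$, the $t\to\infty$ NTK--GP mean interpolates the teacher's targets which equal $\pi^*$ there, and Lemma 6.2 gives the $\kappa\,C_\Theta$ deviation; your worry about reconciling the ``any $t$'' lemma with interpolation is resolved in the paper precisely by applying the lemma at all $t$ and then taking $t\to\infty$, no residual term needed). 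Where you genuinely diverge is the finite-ensemble concentration step: the paper bounds $|\bar{\pi}_\infty(s)-\hat{\pi}_\infty(s)|$ pointwise via the Gaussian tail bound (Lemma B.4), inverts the resulting relation between $\delta$ and $\epsilon$ through the Lambert $W$ function and the Hoorfar--Hassani upper bound, obtaining a per-state $\frac{1}{\sqrt{N}}C_{\Sigma_\infty}(s,\epsilon)$ with essentially $\sqrt{\ln(1/\epsilon)}$ dependence, and only afterwards averages over $d^{\pi^*}$; you instead compute the exact mean absolute deviation of the Gaussian ensemble average, apply Fubini, and use Markov's inequality on the integrated deviation $Z=\mathbb{E}_{s\sim d^{\pi^*}}|\bar{\pi}_\infty(s)-\hat{\pi}_\infty(s)|$, paying an $\epsilon^{-1}$ factor. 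Your route has the advantage of producing a high-probability statement directly for the expectation that actually enters Theorem 3 --- the paper's per-state $1-\epsilon$ bound is pushed through an expectation over a continuum of (correlated) states without a uniformity argument, so your Markov step is logically cleaner on this point --- while the paper's route buys a sharper (logarithmic rather than $1/\epsilon$) dependence on the confidence level. Both yield constants consistent with the theorem as stated, since $\bar{C}_{\Sigma_\infty}$ is only required to depend on the initialisation, the dataset, the visitation distribution and $\epsilon$.
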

\begin{proof}
Thanks to the symmetric structure of the GTI-ZSPT, we can bound the output of an infinite ensemble of distilled policies $\bar{\pi}_\infty$, when evaluated on testing states in $\mathcal{M}_{C_{test}}$, using the bound on the deviation from invariance from Section \ref{sec:background-infinite}. This can be combined with a probabilistic bound for Monte Carlo estimators to bound the output of a finite ensemble $\hat{\pi}_\infty$ on the testing states. With this bound on the output of the student policy, we can use the performance bound for Lipschitz continuous MDPs from Section \ref{sec:background-distil} to get our final result above. See Appendix \ref{app:proof_main} for the full proof.   
\end{proof}

The theorem above offers two insights:
\begin{enumerate}[leftmargin=25px]
    \item The bigger the ensemble size $N$, the smaller the bound on performance.
    \item The bigger the subgroup $B$, the smaller the measure $\kappa$, the smaller the bound on performance. 
\end{enumerate}
As we mentioned before, even though Theorem \ref{thrm:main} requires strict assumptions, we believe the insights apply more broadly. Essentially, the theorem relies on the generalisation benefits induced by training on additional samples generated by performing data augmentation. In practice, it often doesn't matter if the augmentations form a group, are consistent with the original data distribution, or are applied to all classes equally \citep{bishop_training_1995, wu_generalization_2020, hansen_generalization_2021, lin_good_2022, geiping_how_2023, miao_learning_2023}. As such, we believe that in many settings, the benefits of training on a bigger subgroup $B$, can also be realised by simply training on more diverse data, which we clarify with some examples in our experiments.

\section{Experiments}
\label{sec:experiments}
In this section, we demonstrate that the insights provided by the theory translate to practical and workable principles that can improve the generalisation performance of a distilled policy, beyond the performance of the original agent. In Section \ref{sec:exp-illustrative}, we establish that bigger ensembles indeed improve generalisation and show what it means to train on a bigger subgroup $B \le G$ in the illustrative CMDP from Figure \ref{fig:illustrative}. This experiment satisfies the assumptions for the GTI-ZSPT setting, but does not strictly satisfy some of the non-practical assumptions required for Theorem \ref{thrm:main}.  In Section \ref{sec:exp-four-rooms}, we demonstrate that the insights also apply to the more complex Minigrid Four Rooms environment \citep{chevalier-boisvert_minigrid_2023} that breaks most of the assumptions required for the proof in Section \ref{sec:theory}. For experimental details, see Appendix \ref{app:experimental-details}.

\subsection{Reacher with rotational symmetry}
\label{sec:exp-illustrative}
\begin{table}[h]
\vspace{-2mm}
  \caption{Performance of distilled policies in the Illustrative CMDP from Figure \ref{fig:illustrative} for different ensemble sizes $N$ (trained under subgroup $B=C_4$) and different subgroups $B \le SO(2)$ (for $N=1$). Shown are the mean and standard deviation for 20 seeds, and in bold are the best returns including those with overlapping 95\% confidence intervals.}
  \label{table:illustrative_results}
  \centering
  \begin{tabular}{llll}
\toprule
\textbf{Ensemble Size $N$:}       & N=1         &     N=10     & N=100      \\ \cmidrule(r){1-1}
Train Performance         & \textbf{1.17 $\pm$ 0.004} &   \textbf{1.17 $\pm$ 0.004}    & \textbf{1.17 $\pm$ 0.003} \\ 
Test Performance         & 0.75 $\pm$ 0.147 &   0.89 $\pm$ 0.107   & \textbf{1.05 $\pm$ 0.117}  \\ \midrule
\textbf{Subgroup $B \le SO(2)$:} &   $B=C_2$    & $B=C_4$     & $B=C_8$     \\ \cmidrule(r){1-1}
Train Performance  & \textbf{1.17 $\pm$ 0.003} & \textbf{1.17 $\pm$ 0.004} & 1.16 $\pm$ 0.002  \\ 
Test Performance  & 0.39 $\pm$ 0.0805 & 0.75 $\pm$ 0.147 &  \textbf{1.11 $\pm$ 0.072} \\  \bottomrule
\end{tabular}
\end{table}

The theory proves that we can reduce an \emph{upper bound} on the difference to optimal performance when we increase the ensemble size $N$ and train on a bigger subgroup $B \le G$. However, that does not always guarantee strict performance improvements (for example, if the upper bound were so large it is meaningless). Furthermore, the theory requires some assumptions that are not always practical, such as infinitely wide networks, scalar-valued policies, or a Lipschitz-continuous reward function, that we do not expect to affect the overall result in practice. Therefore, we investigate whether the insights from Section \ref{sec:theory} hold without these assumptions, and whether they lead to actual generalisation improvements. In Table \ref{table:illustrative_results} we show that increasing the size of the ensemble, consisting of networks of finite width, \emph{does} actually lead to higher test performance in the CMDP from Figure \ref{fig:illustrative}.

Additionally, in Table \ref{table:illustrative_results} we show that generalisation performance is affected by the size of the subgroup $B \le SO(2)$ we train on. Only training on two training contexts, corresponding to the subgroup $C_2 \le SO(2)$ of $180^\circ$ rotations, performs worse than training on four contexts, corresponding to the subgroup $C_4 \le SO(2)$ of $90^\circ$ rotations (as shown in Figure \ref{fig:illustrative}). Furthermore, training on eight contexts (subgroup $C_8 \le SO(2)$ of all $45^\circ$ rotations) is even better. In this illustrative CMDP, training on larger subgroups requires training in new contexts, but this is not always the case.

\subsubsection{Improving generalisation with diverse data from the same contexts}
\begin{figure}[h]
    \centering
    \includegraphics[width=0.8\textwidth]{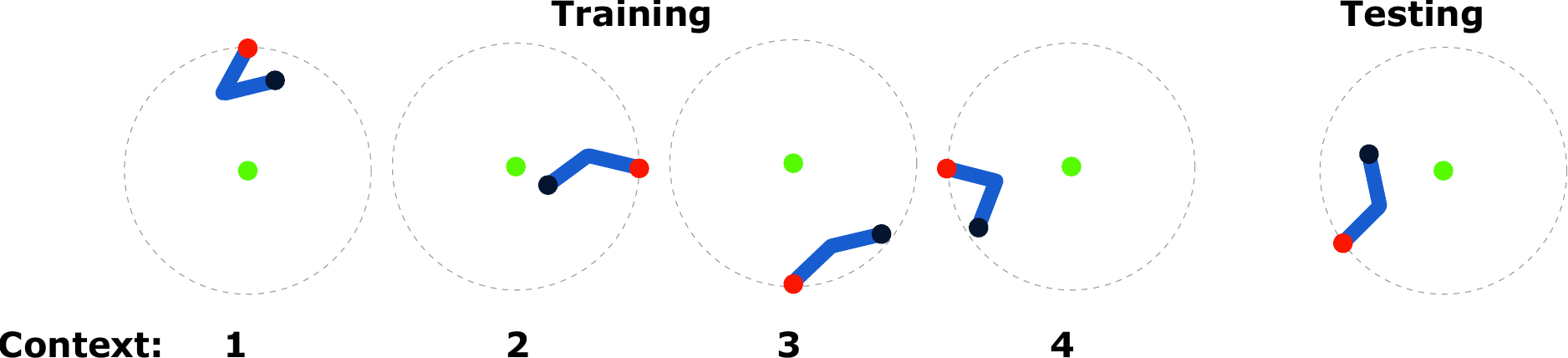}
    \caption{The base context set in the illustrative reacher CMDP with varying shoulder location (red) \emph{and} robot arm pose (blue), see Figure \ref{fig:illustrative} for details. }
    \label{fig:illustrative-full}
\vspace{-1mm}
\end{figure}
In sufficiently complex CMDPs, there are several dimensions of variation between different contexts. For example, we can add different starting poses to the contexts in the CMDP from Figure \ref{fig:illustrative}, such that a context is now defined by a rotation of the shoulder \emph{and} the relative pose of the robot arm (see Figure \ref{fig:illustrative-full}). This CMDP does not strictly satisfy the symmetry conditions in Definition \ref{def:gti-zspt}, but invariance to rotations is still a major component for generalisation. Since the training contexts now also differ in the starting pose, the dataset generated from the training contexts no longer corresponds to performing full data augmentation with respect to a rotational symmetry. However, in this example, this can be fixed by training on additional data from the given set of training contexts. 

To illustrate this, we compare three distillation datasets:
\begin{enumerate}[leftmargin=20px]
    \item \textbf{Training Contexts:} This dataset consists of the teacher's trajectories in the training contexts (contexts 1 through 4 in Figure \ref{fig:illustrative-full}).  
    \item \textbf{Training Contexts + $C_4$:} This dataset consists of the Training Contexts dataset, but with additional trajectories sampled from different starting poses in the same contexts. In particular, for each context, it includes trajectories starting in the rotated poses from the other contexts. This dataset corresponds to performing full data augmentation for the $90^\circ$ rotations subgroup $C_4$ on the Training Contexts dataset (see Appendix \ref{app:exp-illustrative} for a visual representation of this). 
    \item \textbf{Training Contexts + Random:} Like Training Contexts + $C_4$, this dataset includes additional trajectories from different starting poses. However, for this dataset the new starting poses are sampled uniformly at random. 
\end{enumerate}
The Training Contexts + $C_4$ dataset illustrates how in this CMDP the subgroup $B \le SO(2)$ can be increased by sampling additional trajectories from the same training contexts (technically, the Training Contexts dataset corresponds to the trivial subgroup $\{e\} \le SO(2)$ consisting of only the identity element $e$, which is smaller than $C_4 \le SO(2)$). In Table \ref{table:illustrative_diverse_data}, we see that training on this dataset indeed produces higher test performance than the Training Contexts dataset. However, the same generalisation benefits are also observed for the Training Contexts + Random dataset. 

\begin{table}[h]
  \caption{Performance of distilled policies (for $N=1$) in the Illustrative CMDP from Figure \ref{fig:illustrative-full} for different datasets. The datasets consist of the teacher's trajectories sampled for several starting states. Shown are the mean and standard deviation for 20 seeds, and in bold are the best returns including those with overlapping 95\% confidence intervals.}
  \label{table:illustrative_diverse_data}
  \centering
  \begin{tabular}{llll}
\toprule
\textbf{Distillation Dataset}       & Train         &     Test       \\ \cmidrule(r){1-1}
Training Contexts          & \textbf{1.20 $\pm$ 0.157} &    0.39 $\pm$ 0.051       \\ 
Training Contexts + $C_4$          & \textbf{1.11 $\pm$ 0.099} &   \textbf{0.48 $\pm$ 0.080}        \\  
Training Contexts + Random         & \textbf{1.14 $\pm$ 0.072} &  \textbf{0.49 $\pm$ 0.077}         \\ \bottomrule
\end{tabular}
\end{table}

The Training Contexts + Random dataset illustrates that the generalisation benefits of data augmentation go far beyond the "training to be invariant under a group symmetry" paradigm. Some studies suggest that the benefits are simply due to the regularising effect that data augmentation can provide \citep{bishop_training_1995, wu_generalization_2020, hansen_generalization_2021}, or by making it more difficult to overfit to spurious correlations \citep{raileanu_automatic_2021, shen_data_2022}. In this sense, we expect the insight of training with full data augmentation on a bigger subgroup $B \le G$ from Theorem \ref{thrm:main}, to translate in practice to simply training on more diverse data, even data that is sampled from the same contexts.

\subsection{Four Rooms}
\label{sec:exp-four-rooms}
In this section, we demonstrate that increasing ensemble size and data diversity can significantly increase the generalisation performance of a distilled policy, even when most of the assumptions for Theorem \ref{thrm:main} no longer hold. The Four Rooms grid world environment from the Minigrid benchmark does not appear to have an invariant symmetry that plays a core part in generalising to new contexts, as required for the definition of a GTI-ZSPT. The teacher is an agent trained with Proximal Policy Optimisation \citep[PPO][]{schulman_proximal_2017} and is therefore not necessarily optimal in the training contexts. Additionally, the teacher is a stochastic policy that is distilled by regressing on the vector of probabilities or behaviour cloned using a logarithmic loss (see Appendix \ref{app:background-distil} for more background on these losses).

\subsubsection{Policy distillation improves generalisation}
\label{sec:fr_results}
For the experiments in the Four Rooms environment the teacher is a policy trained with the PPO+Explore-Go algorithm for 8 million environment steps. The Explore-Go approach was introduced by \citet{weltevrede_exploration_2025} to increase generalisation by generating a more diverse training distribution for the RL agent. It leverages a separately trained pure exploration agent, rolled out at the beginning of each episode, to artificially increase the starting state distribution for the PPO agent.\footnote{For pure exploration, the objective focuses solely on exploring new parts of the state space, ignoring rewards.} Since this teacher trains on a more diverse state distribution than a normal PPO agent, it provides good teaching targets for our distillation datasets. We compare the following three datasets:
\begin{enumerate}[leftmargin=20px]
    \item \textbf{Teacher:} This dataset consists of the teacher's trajectories in the (original) training contexts. 
    \item \textbf{Explore-Go:} This dataset mimics the training distribution for the Explore-Go approach by sampling teacher trajectories from additional starting states, generated by a pure exploration policy rolled out at the start of each episode. This dataset has the property that all the data is on-policy for our teacher, yet more diverse than the Teacher dataset. 
    \item \textbf{Mixed:} This dataset is a 50/50 mix of Teacher and trajectories collected by a separately trained pure exploration policy. This dataset is diverse, but does not solely consist of states encountered by the teacher. 
\end{enumerate}
In Table \ref{table:fr_results} we can see that the more diverse datasets (Mixed and Explore-Go) significantly outperform the Teacher dataset and that the ensemble of size $N=10$ outperforms the single student $N=1$ for each dataset type. Moreover, the ensemble, distilled on the Explore-Go dataset, generalises significantly better than the original PPO agent, whilst only requiring around 12\% additional environment steps (compared to the teacher's training budget).

\begin{table}[h]
  \caption{Performance of an ensemble (of size $N$) of policy distillation or behaviour cloning policies on various datasets compared to the PPO+Explore-GO teacher in the Four Rooms environment. Shown are mean and standard deviation over 20 seeds, and in bold are the best returns including those with overlapping 95\% confidence intervals (within the same category).}
  \label{table:fr_results}
  \centering
  \setlength{\tabcolsep}{3.5pt}
  \begin{tabular}{llcccc}
\toprule
  &   \textbf{Dataset}   &  \textbf{Train (N=1)}      &      \textbf{Train (N=10)}    & \textbf{Test (N=1)}   &  \textbf{Test (N=10)}     \\ \midrule
   \textbf{PPO+Explore-Go }    & \quad -     & \textbf{0.92 $\pm$ 0.020} &    -              & \textbf{0.74 $\pm$ 0.040} &          -        \\ \midrule
   
\textbf{Distillation} & Teacher        & \textbf{0.92 $\pm$ 0.020} & \textbf{0.92 $\pm$ 0.020} &  0.56 $\pm$ 0.049 & 0.67 $\pm$ 0.054  \\
& Mixed                & \textbf{0.92 $\pm$ 0.020} & \textbf{0.92 $\pm$ 0.020} &  0.72 $\pm$ 0.040 & 0.84 $\pm$ 0.034  \\
& Explore-Go           & \textbf{0.92 $\pm$ 0.020} & \textbf{0.92 $\pm$ 0.019} & 0.78 $\pm$ 0.041 & \textbf{0.88 $\pm$ 0.036}  \\ \midrule

\textbf{Behaviour Cloning} & Teacher        & \textbf{0.91 $\pm$ 0.022} & \textbf{0.92 $\pm$ 0.020} &  0.26 $\pm$ 0.046 & 0.37 $\pm$ 0.054  \\
& Mixed                & 0.86 $\pm$ 0.031 & \textbf{0.91 $\pm$ 0.025} & 0.15 $\pm$ 0.024  & 0.20 $\pm$ 0.026  \\
& Explore-Go           & 0.87 $\pm$ 0.028 & \textbf{0.92 $\pm$ 0.021} & 0.56 $\pm$ 0.060 & \textbf{0.75 $\pm$ 0.045}  \\ \bottomrule
  \end{tabular}
\end{table}

Lastly, we demonstrate in this section that the same insights also hold for a logarithmic behaviour cloning loss for stochastic policies that is widely used in practice \citep{foster_is_2024}. At the bottom of Table \ref{table:fr_results}, we show that the an ensemble (of size $N=10$), distilled on the Explore-Go dataset, generalises significantly better than a single behaviour cloning agent on the Teacher dataset. Note that behaviour cloning achieves lower performances than distillation, and that BC performs considerably worse on the Mixed dataset. In our definition of behaviour cloning, the student policy learns to imitate whatever policy collected the dataset, by only observing the actions that were actually sampled during collection. Therefore, the BC agent performs worse than the distillation agent, since the latter has access to more information (all the action probabilities of the teacher). On the Mixed dataset, the BC agent clones the behaviour policy that consists of a 50/50 mix of the (optimal) Teacher policy and (suboptimal) pure exploration policy. The resulting cloned behaviour performs even worse than the BC agent trained on the Teacher dataset. In contrast, the policy distillation agent on the Mixed dataset regresses on the action probabilities of the Teacher, on the states encountered by the 50/50 mixture of policies, and therefore has a much better learning target. 

\section{Discussion and limitations}
\label{sec:limitations}
The experiments in the Four Rooms environment in Section \ref{sec:fr_results} serve to empirically demonstrate how our insights can be leveraged to significantly enhance the generalisation performance of a reinforcement learning agent through policy distillation. A clear example of this is seen in our ensemble $N=10$, distilled on the Explore-Go dataset, which achieves substantially higher test performance than the original PPO+Explore-Go teacher policy (see Table \ref{table:fr_results}). The potential of policy distillation after training as a tool to improve generalisation was initially identified in~\citet{lyle_learning_2022}, but we believe the results of this paper provide a more compelling argument and empirical evidence for this phenomenon. 

Whether the benefits of performing data augmentation with respect to some symmetry group actually stem from induced invariance or reduced overfitting and other forms of regularisation, is still an ongoing topic of discussion in the literature \citep{lyle_benefits_2020, shen_data_2022}. To add to this discussion, in Appendix \ref{app:invariance} we measure the invariance of our trained models on the 'Reacher with rotational symmetry' experiments from Table \ref{table:illustrative_results} and plot it against the ensemble size $N$ and subgroup $B \le SO(2)$. We find that in this particular experiment, the distilled policies \emph{do} become more invariant as ensemble and subgroup size increase, just as our theory predicts.  

Obtaining tight generalisation bounds for neural networks is notoriously challenging \citep{jiang_fantastic_2020, gastpar_fantastic_2024}. Moreover, some of the assumptions for Theorem \ref{thrm:main}, such as infinitely wide neural networks, are hard to meet in practice. Therefore, we believe the true strength of our theory lies in its ability to identify crucial properties of the dataset distribution and distilled ensemble that are capable of improving generalisation performance. Nonetheless, in Appendix \ref{app:tight_bound}, we analyse how well our results fit the $a + \frac{b}{\sqrt{N}}$ relation identified by our theory. We find that our results reasonably agree with the shape of the theoretical upper bound, suggesting that our bound is not completely vacuous. 

Finally, all ensemble members are trained independently, and during inference, can also be evaluated independently (an independent forward pass with an average over the output of the ensemble afterwards). This inherent independence means that both the training and inference processes of the ensemble are parallelisable. If parallelisation is not feasible, the runtime for both training and inference would increase linearly with the ensemble size. It is important to note that all ensemble members are distilled on the same dataset. This means the small number of additional environment steps required to sample this dataset is independent of ensemble size.

\section{Conclusion}
In this paper, we investigate the advantage of policy distillation for improving zero-shot policy transfer (ZSPT) in reinforcement learning. We introduce the generalisation through invariance ZSPT setting, to prove a generalisation bound for a policy distilled after training. Our analysis highlights two practical insights:  to 1) distil an ensemble of policies, and to 2) distil it on a diverse set of states from the training contexts. We empirically evaluate that the insights hold in the Four Rooms environment from the Minigrid benchmark, even though it does not satisfy all the assumptions required for the theory, and that they also translate to the behaviour cloning setting. Moreover, we show that distilling an ensemble of policies on diverse set of states can produce a policy that generalises significantly better than the original RL agent, thus demonstrating that policy distillation can be a powerful tool to increase generalisation performance of reinforcement learning agents.

\newpage
\begin{ack}
We thank Caroline Horsch, Laurens Engwegen and Oussama Azizi for fruitful discussions and feedback. The project has received funding from the EU Horizon 2020 programme under grant number 964505 (Epistemic AI) and was also partially funded by the Dutch Research Council (NWO) project {\em Reliable Out-of-Distribution Generalization in Deep Reinforcement Learning} with project number OCENW.M.21.234. The computational resources for empirical work were provided by the \citet{delft_ai_cluster_daic_delft_2024} and the \citet{delft_high_performance_computing_centre_dhpc_delftblue_2024}.
 
\end{ack}

\bibliographystyle{plainnat}
\bibliography{bib}

\newpage
\appendix

\section{Extended background}

\subsection{Policy distillation \& behaviour cloning}
\label{app:background-distil}
In policy distillation, a knowledge transfer occurs by distilling a policy from a \emph{teacher} network into a newly initialised \emph{student} network. Depending on the objective of the knowledge transfer, the student network can be smaller, the same size, or bigger than the teacher network. Moreover, there are many different ways the policy can be distilled \citep{czarnecki_distilling_2019}, depending on the specific loss function used \citep{ghosh_divide-and-conquer_2018, teh_distral_2017}, whether the student can collect additional data during distillation \citep{lin_collaborative_2017, parisotto_actor-mimic_2016, ross_reduction_2011}, or has access to additional information like rewards or a teacher's value function \citep{czarnecki_distilling_2019}. 

We consider a student network with the same architecture and size as the teacher that is distilled on a fixed dataset (so without allowing additional interactions of the student with the environment). This fixed dataset is collected after training, and usually consists of on-policy data collected by the teacher itself. In this paper, we analyse a simplified setting where both the student and teacher policy are assumed to be deterministic and scalar: $\pi_\theta: S \to \mathbb{R}, \pi_\beta: S \to \mathbb{R}$. A simple distillation loss in this setting is the mean squared error (MSE) between the output of the two policies:
\begin{equation*}
     l_{D}(\theta, \mathcal{D}, \pi_\beta) = \frac{1}{n} \sum_{s \in \mathcal{D}} (\pi_\theta(s) - \pi_\beta(s) )^2 
\end{equation*}
where $\mathcal{D} = \{ s_1, ..., s_n \}$ is the set of states we distil on.

More generally, distillation can be performed between deterministic, vector valued student and teacher policies $\pi_\theta: S \to \mathbb{R}^d, \pi_\beta: S \to \mathbb{R}^d$ with the loss
\begin{equation}
\label{eq:distil-vector}
     l(\theta, \mathcal{D}, \pi_\beta) = \frac{1}{n} \sum_{s \in \mathcal{D}} ||\pi_\theta(s) - \pi_\beta(s) ||_2^2
\end{equation}
For stochastic policies, it is more common to minimise the Kullback-Leibler (KL) divergence between the student and teacher policies \citep{arora_multi-task_2018}, sometimes including an entropy regularisation term \citep{teh_distral_2017, lyle_learning_2022}
\begin{equation*}
     l(\theta, \mathcal{D}, \pi_\beta) = \frac{1}{n} \sum_{s \in \mathcal{D}}  D_{KL}(\pi_\theta(s) || \pi_\beta(s)) + \lambda H(\pi_\theta)
\end{equation*}
where $H(\cdot)$ denotes the entropy of the policy. An alternative approach for discrete, stochastic policies is to regress towards the logits or probabilities over actions from the teacher:
\begin{equation}
\label{eq:distil-prob}
     l(\theta, \mathcal{D}, \pi_\beta) = \frac{1}{n} \sum_{s \in \mathcal{D}} ||\pi_\theta(\cdot|s) - \pi_\beta(\cdot|s) ||_2^2
\end{equation}
where $\pi(\cdot|s)$ now indicates the vector (of dimension $|A|$) of probabilities or logits that policy $\pi$ produces in state $s$. 

As mentioned above, usually the policy is distilled on on-policy data collected by the teacher. However, in general, a policy can in principle be distilled on any distribution over states, since the targets produced by the teacher (i.e., $\pi_\beta(s)$ or $\pi_\beta(\cdot|s)$) can be trained off-policy, independently of how the state $s$ was reached, or which action was taken in $s$ during collection. 

\subsubsection{Behaviour cloning}
We consider \emph{behaviour cloning} (BC) as a specific instance of policy distillation, where the student network only has access to a fixed dataset of the teacher's behaviour (state-action tuples) and not additional information like the teacher's policy, value function or environment rewards. The goal in behaviour cloning is to learn to imitate the behaviour policy (e.g., the teacher) that collected the dataset. In this sense, it differs from the general distillation setting, in that the learning targets are always on-policy with respect to the policy (or the mixture of policies) that collected the data. 

Just as for distillation, the BC loss can differ depending on whether the student policy is deterministic or stochastic. For deterministic policies, the loss is usually the MSE between the student and the action observed in the dataset:
\begin{equation*}
 l(\theta, \mathcal{D}_{\beta}) = \frac{1}{n} \sum_{i =0}^{n} || \pi_\theta(s_i) - \vec{a_i} ||_2^2 
\end{equation*}
where $\mathcal{D}_{\beta} = \{ (s_1, a_1), ..., (s_n, a_n) \}$ is a dataset of behaviour of size $n$. For stochastic policies, it is more common to use a logarithmic loss \citep{foster_is_2024}
\begin{equation}
\label{eq:bc-log}
 l_(\theta, \mathcal{D}_{\beta}) = - \sum_{i =0}^{n} \ln \pi_{\theta}(a_i|s_i) 
\end{equation}
Note that these losses mainly differ from the distillation losses in that the learning targets are the actions $a_i$ taken by the policy that collected the dataset, rather than the (potentially off-policy) teacher policy $\pi_\beta(s_i)$.

\subsection{Group Symmetry}
\label{app:background-symm}
A group is a non-empty set $G$ together with a binary operation $\cdot$ that satisfies the following requirements:
\begin{align*}
    a \cdot b \in G, \quad \forall a,b \in G \qquad &\text{(Closure)} \\
    (a \cdot b) \cdot c = a \cdot (b \cdot c), \quad \forall a,b,c \in G \qquad &\text{(Associativity)} \\
    \exists e \in G, \quad e \cdot a = a \cdot e = a, \quad \forall a \in G \qquad &\text{(Identity)} \\
    \forall a \in G, \exists a^{-1} \in G, \quad a \cdot a^{-1} = a^{-1} \cdot a =  e \qquad &\text{(Inverse)} \\
\end{align*}
We will abuse notation slightly by denoting both the group and the non-empty set with $G$, depending on context. 

We can define a \emph{group representation} $\psi_X$ acting on $X$, as a map $\psi: G \to \text{GL}(X)$ from $G$ to the general linear group $\text{GL}(X)$ of a vector space $X$, where the general linear group is defined as the set of $n \times n$ invertible matrices (for finite dimensional vector space $X$ with dimension $n$) with matrix multiplication as operator and where the map $\psi$ is a group homomorphism, i.e. $\psi(a) \psi(b) = \psi(a \cdot b), \quad \forall a, b \in G$. With these definitions, invariance of a function $f$ is defined as
\begin{equation*}
    f(\psi_X(g) x) = f(x) \quad \forall x \in X, g \in G
\end{equation*}

A useful property when performing full data augmentation with a group $G$, is that applying a transformation from $G$ to any of the training samples in $\mathcal{T}_G$, is equivalent to applying a permutation $\mathtt{p}_g$ the augmented training dataset indices:
\begin{equation}
\label{eq:perm}
    \psi_X(g) x_i = x_{\mathtt{p}_g(i)}, \qquad \text{where } i \in \{1,..., |\mathcal{T}_G| \}
\end{equation}

\subsection{The infinite width limit}
\label{app:background-infinite}

In the limit of infinite layer width, an ensemble of neural networks from random initialization follows a Gaussian process that is characterised by the neural tangent kernel 
\citep[NTK][]{jacot_neural_2018} defined as
\begin{equation*}
    \Theta(x, x') = \sum_{l=1}^L \mathbb{E}_{\theta \sim \mu} \bigg[ \bigg(\frac{\partial f_\theta(x)}{\partial \theta^{(l)}} \bigg)^T \bigg(\frac{\partial f_\theta(x')}{\partial \theta^{(l)}} \bigg) \bigg] \,,
\end{equation*}
where we assumed the network $f_\theta$ has $L$ layers and $\theta^{(l)}$ denotes the parameters at layers $l \in [1, L]$ respectively. The Gaussian process at time $t$ has mean $m_t$ and covariance $\Sigma_t$ \citep{lee_wide_2019}:
\begin{align*}
    m_t(x) &= \Theta(x,x_i) [ \Theta^{-1}T_t]_{ij} y_j \\
    \Sigma_t(x,x') &= \mathcal{K}(x,x') + \Sigma_t^{(1)}(x, x') - (\Sigma_t^{(2)}(x,x') + \text{ h.c.})
\end{align*}
where we use the Einstein notation convention to indicate implicit sums over the dataset indices $i,j$, h.c. indicates the Hermitian conjugate of the preceding term, $T_t = (\mathbb{I} - \exp(-\eta \Theta t))$, $\mathcal{K}(x,x') = \mathbb{E}_{\theta \sim \mu} [f_\theta(x) f_\theta(x')] $ is the neural network Gaussian process (NNGP) kernel, and $\Sigma_t^{(1)}$ and $\Sigma_t^{(2)}$ are defined as follows:
\begin{align*}
    \Sigma_t^{(1)}(x,x') &= \Theta(x,x_i) [\Theta^{-1} T_t \mathcal{K} T_t \Theta^{-1}]_{ij} \Theta(x_j,x') \\
    \Sigma_t^{(2)}(x,x') &= \Theta(x,x_i) [\Theta^{-1} T_t]_{ij} \: \mathcal{K}(x_j,x'). \\
\end{align*}
We use shorthand notation $\Sigma_t(x,x) = \Sigma_t(x)$ for the NNGP variance. 

An infinite ensemble $\bar{f}_t$ equals the mean $m_t$ of the Gaussian process: $\bar{f}_t(x) = m_t(x)$. Note that for $t \to \infty$, the output of the infinite ensemble $\bar{f}_\infty$ on the training inputs $\mathcal{X}$ converges to the targets $\mathcal{Y}$:
\begin{equation*}
    \bar{f}_\infty(\mathcal{X}) = m_\infty(\mathcal{X}) =  \Theta(\mathcal{X},\mathcal{X})  \Theta(\mathcal{X, \mathcal{X}})^{-1} T_\infty \mathcal{Y} = \mathcal{Y}
\end{equation*}

\newpage
\section{Proof of Theorem \ref{thrm:main}}
\label{app:proof_main}
In this section, we will go through the steps for the proof of the main theorem of section \ref{sec:theory}.  We first repeat the definition of the GTI-ZSPT and associated discrepency measure $\kappa$
\maindef*
\subdef*

\subsection{Invariance of an ensemble}
In order to prove Theorem \ref{thrm:main}, we first repeat Lemma 6.2 from \citet{gerken_emergent_2024} that bounds the invariance of an infinitely large ensemble of infinitely wide neural networks trained with full data augmentation on some finite subgroup $B \leq G$:
\begin{restatable}[Lemma 6.2]{namedtheorem}{LemmaSubgroups}
\label{thrm:inv-bound}
    Let $\pi_\theta: S \to \mathbb{R}$ be an infinitely wide neural network with parameters $\theta$ and with Lipschitz continuous derivatives with respect to the parameters. Furthermore, let  $\bar{\pi}_t$ be an infinite ensemble $\bar{\pi}_t(s) = \mathbb{E}_{\theta \sim \mu}[ \pi_{\mathcal{L}_t\theta}(s) ]$, where the initial weights $\theta$ are sampled from a distribution $\mu$ and the operator $\mathcal{L}_t$ maps $\theta$ to its corresponding value after $t$ steps of gradient descent with respect to a MSE loss function. Define the error $\kappa$ as a measure of discrepancy between representations from the group $G$ and its finite subgroup $B$:
    \begin{equation}
    \kappa = \max_{g \in G} \min_{b \in B} || \psi_S(g) - \psi_S(b)||_{op}
    \end{equation}
    The prediction of an infinite ensemble trained with full data augmentation on $B \leq G$ deviates from invariance by
    \begin{equation}
    \big|\bar{\pi}_t \big(s \big) - \bar{\pi}_t \big(\psi_S(g) s \big) \big| \leq \kappa \: C_\Theta(s), \qquad \forall g \in G
    \end{equation}
    for any time $t$. Here $s \in S$ can by any state and $C_\Theta$ is independent of $g$. 
\end{restatable}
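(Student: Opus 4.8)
The plan is to exploit the exact description of an infinitely wide ensemble from Section~\ref{sec:background-infinite}: after $t$ steps of gradient descent on the MSE loss, $\bar\pi_t$ coincides with the mean of the NTK Gaussian process, $\bar\pi_t(s)=m_t(s)=\Theta(s,x_i)[\Theta^{-1}T_t]_{ij}y_i$, where the index set $\{x_i\}$ now runs over the fully $B$-augmented training set $\mathcal{T}_B$ and the $y_i$ are the shared distillation targets. Everything then reduces to analysing how this fixed closed-form predictor behaves under the input transformations $\psi_S(g)$, so no further appeal to the gradient descent dynamics is needed.

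First I would show that $m_t$ is \emph{exactly} invariant under the subgroup $B$. Because $\psi_S$ is an orthogonal representation, the NTK is invariant under the diagonal action of $B$, i.e.\ $\Theta(\psi_S(b)x,\psi_S(b)x')=\Theta(x,x')$ for all $b\in B$; and because $\mathcal{T}_B$ is closed under full augmentation by $B$, acting on a training input with $\psi_S(b)$ only permutes the dataset indices as in \eqref{eq:perm}, while the targets $\{y_i\}$ are left fixed. Writing $P_b$ for the corresponding permutation matrix, this gives $\Theta(\psi_S(b)s,\cdot)=\Theta(s,\cdot)P_b^{\top}$, $P_b\Theta P_b^{\top}=\Theta$, $P_b T_t P_b^{\top}=T_t$ and $P_b y=y$, whence $m_t(\psi_S(b)s)=m_t(s)$ for every $b\in B$, every state $s$, and every $t$.

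Next I would transfer this to the full group $G$. Given $g\in G$, let $b^{\star}\in B$ attain $\min_{b\in B}\lVert\psi_S(g)-\psi_S(b)\rVert_{op}$, so $\lVert\psi_S(g)-\psi_S(b^{\star})\rVert_{op}\le\kappa$ by Definition~\ref{def:kappa}. Using the exact $B$-invariance just established, $\bar\pi_t(\psi_S(g)s)-\bar\pi_t(s)=m_t(\psi_S(g)s)-m_t(\psi_S(b^{\star})s)=\bigl(\Theta(\psi_S(g)s,x_i)-\Theta(\psi_S(b^{\star})s,x_i)\bigr)[\Theta^{-1}T_t]_{ij}y_i$. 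Bounding each kernel difference by the Lipschitz constant $L_\Theta$ of $\Theta(\cdot,x_i)$ in its first argument times $\lVert\psi_S(g)s-\psi_S(b^{\star})s\rVert\le\lVert\psi_S(g)-\psi_S(b^{\star})\rVert_{op}\lVert s\rVert\le\kappa\lVert s\rVert$, and using $\lVert T_t\rVert_{op}\le 1$ uniformly in $t$ (which follows from the definition of $T_t$ together with $\Theta\succeq 0$) to control the remaining coefficients, yields $\bigl|\bar\pi_t(s)-\bar\pi_t(\psi_S(g)s)\bigr|\le\kappa\,C_\Theta(s)$ with $C_\Theta(s)$ assembled from $L_\Theta$, $\lVert s\rVert$, $\lVert\Theta^{-1}\rVert_{op}$ and the $y_i$ --- all independent of $g$ (and in fact of $t$).

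The main obstacle is the exact-invariance step: one has to argue carefully that $\Theta$ restricted to $\mathcal{T}_B$ is invertible (or work with a ridge / pseudo-inverse on its range), that orthogonality of $\psi_S$ really forces the architecture's NTK to be a diagonally invariant kernel, and that the permutation commutes through the matrix function $T_t$. A secondary technical point is extracting the input-Lipschitzness of $\Theta(\cdot,x_i)$ from the stated smoothness hypothesis on $\pi_\theta$ (Lipschitz continuous derivatives with respect to the parameters), which is precisely what keeps $L_\Theta$, and hence $C_\Theta$, finite.
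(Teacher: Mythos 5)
Your proposal is correct and takes essentially the same route as the paper's repeated proof: both reduce the infinite ensemble to the NTK mean $m_t$ on the $B$-augmented dataset, invoke the permutation/commutation property of full augmentation (Lemma 5.2) together with invariance of the kernel under the orthogonal representation, and turn the operator-norm discrepancy $\kappa$ into the final bound via the input-Lipschitzness of $\Theta$ inherited from the Lipschitz parameter-derivatives, with $g$-independent coefficients $[\Theta^{-1}T_t\,\mathcal{Y}]$. The only difference is bookkeeping: you first establish exact $B$-invariance and then perturb the test input from $\psi_S(b^\star)s$ to $\psi_S(g)s$ (so $\kappa$ multiplies $\|s\|$), whereas the paper pushes the group action onto the training points and bounds $\|\psi_S(g)s_j-\psi_S(b)s_j\|\le\kappa\|s_j\|$ there; the technical points you flag (diagonal invariance of the NTK under $\psi_S$ and invertibility of $\Theta$ on the augmented set) are likewise assumed implicitly by the paper.
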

\begin{proof}
For completeness we repeat the proof in our own notation in \ref{app:proof_invariance}.
\end{proof}

Next, we prove a lemma that bounds the 
prediction error between a finite ensemble and an infinite ensemble:
\begin{lemma}
\label{thrm:finite}
    The difference between the infinite ensemble $\bar{\pi}_t$ and its finite Monte Carlo estimate $\hat{\pi}_t$ of size $N$, is bounded by
    \begin{equation}
        |\bar{\pi}_t(s) - \hat{\pi}_t(s)| \leq \frac{1}{\sqrt{N}} C_{\Sigma_t}(s, \epsilon)
    \end{equation}
    with probability at least $1-\epsilon$. Here $\Sigma_t$ is the variance of the NNGP at time $t$ and $C_{\Sigma_t}(s, \epsilon)$ depends on $\Sigma_t$, the state $s$ and confidence level $\epsilon$.
\end{lemma}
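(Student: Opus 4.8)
The plan is to treat $\hat{\pi}_t(s)$ as a plain Monte Carlo estimator of $\bar{\pi}_t(s)$ and close the gap with a Gaussian concentration bound. First I would recall the structural fact from the infinite-width theory of Section~\ref{sec:background-infinite}: for a fixed input $s$, the scalar output $\pi_{\mathcal{L}_t\theta}(s)$ of a single ensemble member, viewed as a random variable over the initialisation $\theta\sim\mu$, is Gaussian with mean $m_t(s)=\bar{\pi}_t(s)$ and variance $\Sigma_t(s)=\Sigma_t(s,s)$. Since the $N$ ensemble members arise from independent draws $\theta_1,\dots,\theta_N\sim\mu$, the outputs $\pi_{\mathcal{L}_t\theta_1}(s),\dots,\pi_{\mathcal{L}_t\theta_N}(s)$ are i.i.d.\ copies of this Gaussian.

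Second, because an average of i.i.d.\ Gaussians is again Gaussian, $\hat{\pi}_t(s)=\frac1N\sum_{i=1}^N\pi_{\mathcal{L}_t\theta_i}(s)$ is distributed as $\mathcal{N}\!\big(\bar{\pi}_t(s),\,\Sigma_t(s)/N\big)$, so the centred error $Z:=\hat{\pi}_t(s)-\bar{\pi}_t(s)$ satisfies $Z\sim\mathcal{N}(0,\Sigma_t(s)/N)$. Third, I would apply the standard Gaussian tail bound $\mathbb{P}(|Z|>u)\le 2\exp\!\big(-u^2N/(2\Sigma_t(s))\big)$; setting the right-hand side equal to $\epsilon$ and solving for $u$ gives that, with probability at least $1-\epsilon$, $|\hat{\pi}_t(s)-\bar{\pi}_t(s)|\le \frac{1}{\sqrt N}\sqrt{2\Sigma_t(s)\ln(2/\epsilon)}$. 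This is exactly the claimed bound with $C_{\Sigma_t}(s,\epsilon)=\sqrt{2\Sigma_t(s)\ln(2/\epsilon)}$, which depends only on the NNGP variance $\Sigma_t$ evaluated at $s$ and on the confidence level $\epsilon$, as stated, and holds for any $t$. If one prefers to avoid invoking exact Gaussianity, Chebyshev's inequality applied to the i.i.d.\ average with finite per-member variance $\Sigma_t(s)$ delivers the same $1/\sqrt N$ rate with $C_{\Sigma_t}(s,\epsilon)=\sqrt{\Sigma_t(s)/\epsilon}$.

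I do not expect a genuine obstacle: the lemma is essentially a concentration statement for a sample mean. The only points needing care are (i) invoking the infinite-width result so that the single-input marginal really is the one-dimensional Gaussian with variance $\Sigma_t(s)$, and (ii) being explicit that the probability is over the joint draw of the $N$ initialisations $\theta_1,\dots,\theta_N$, uniformly in $t$ and in the evaluation state $s$. Everything else is a one-line tail-bound computation.
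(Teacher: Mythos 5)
Your proof is correct and rests on the same underlying fact as the paper's -- that in the infinite-width limit the ensemble members $\pi_{\mathcal{L}_t\theta_i}(s)$ are i.i.d.\ Gaussian over the initialisations with mean $\bar{\pi}_t(s)$ and variance $\Sigma_t(s)$, so $\hat{\pi}_t(s)$ is Gaussian with variance $\Sigma_t(s)/N$ -- but the technical route differs in the tail-bound step. The paper invokes Lemma B.4 of \citet{gerken_emergent_2024}, a Mills-ratio-type bound $\mathbb{P}[|\bar{\pi}_t(s)-\hat{\pi}_t(s)|>\delta]\le\sqrt{2/\pi}\,(\sigma_s/\delta)\exp(-\delta^2/2\sigma_s^2)$, whose inversion in $\epsilon$ has no closed form and therefore requires the principal branch of the Lambert $W$ function together with the upper bound of \citet{hoorfar_inequalities_2008} to extract an explicit $C_{\Sigma_t}(s,\epsilon)$. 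You instead use the standard two-sided Gaussian tail bound $\mathbb{P}(|Z|>u)\le 2\exp(-u^2N/2\Sigma_t(s))$, which inverts cleanly to $C_{\Sigma_t}(s,\epsilon)=\sqrt{2\Sigma_t(s)\ln(2/\epsilon)}$; your Chebyshev fallback is also valid and drops the Gaussianity assumption entirely at the cost of a worse $\epsilon$-dependence. The trade-off is that the paper's constant is somewhat tighter (the $\sigma_s/\delta$ prefactor improves the bound whenever $\delta$ exceeds roughly $0.4\,\sigma_s$), while yours is more elementary and fully explicit, which is arguably preferable since the lemma only needs \emph{some} $C_{\Sigma_t}(s,\epsilon)$ with the $1/\sqrt{N}$ rate. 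One small caution: your derivation (like the paper's) is pointwise in $s$ and $t$ -- the event depends on the fixed evaluation state -- so your closing remark about holding ``uniformly in $t$ and in the evaluation state $s$'' should be read as ``for each $t$ and $s$ separately''; genuine uniformity over a set of states would need a union bound, which neither proof supplies and the lemma does not claim.
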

\begin{proof}
    We start with Lemma B.4 from \citet{gerken_emergent_2024} that holds for any Monte-Carlo estimator:
    \begin{restatable}[Lemma B.4]{namedtheorem}{LemmaMC}
        The probability that the deep ensemble $\bar{\pi}_t$ and its Monte-Carlo estimate $\hat{\pi}_t$ differ by more than a given threshold $\delta$ is bounded by 
        \begin{equation*}
            \mathbb{P} \big[  |\bar{\pi}_t(s) -  \hat{\pi}_t(s)| > \delta \big] \leq \sqrt{\frac{2}{\pi}} \frac{\sigma_s}{\delta} \exp \bigg( - \frac{\delta^2}{2\sigma_s^2} \bigg),
        \end{equation*}
        where we have defined 
        \begin{equation*}
            \sigma_s^2 := Var(\hat{\pi}_t)(s) = \frac{\Sigma_t(s)}{N}
        \end{equation*}
        where $\Sigma_t(s)$ is the NNGP variance and $N$ is the finite ensemble size. 
    \end{restatable}
    \begin{proof}
    For completeness we repeat the proof in our own notation in \ref{app:proof_finite}.
    \end{proof}

    We can use this lemma to bound the probability of the deviation between finite and infinite ensemble to be smaller than a threshold $\delta$:
    \begin{align*}
        \mathbb{P} \big[ |\bar{\pi}_t(s) -  \hat{\pi}_t(s)| \leq \delta \big] &= 1 - \mathbb{P} \big[ |\bar{\pi}_t(s) -  \hat{\pi}_t(s)| > \delta \big] \\
        &\geq 1 - \epsilon
    \end{align*}
    where $\epsilon = \sqrt{\frac{2}{\pi}} \frac{\sigma_s}{\delta} \exp \big( - \frac{\delta^2}{2\sigma_s^2} \big)$. 
    Next, we rewrite $\delta$ in terms of a given confidence level $\epsilon$:
    \begin{align*}
        \epsilon &= \sqrt{\frac{2}{\pi}} \frac{\sigma_s}{\delta} \exp \big( - \frac{\delta^2}{2\sigma_s^2} \big) \\
        \frac{2}{\pi \epsilon^2 } &=  \frac{\delta^2}{\sigma_s^2} \exp \big( \frac{\delta^2}{\sigma_s^2} \big) \\
        \frac{\delta^2}{\sigma_s^2} &= W_0(\frac{2}{\pi \epsilon^2 }) \\
        \delta &= \sigma_s \sqrt{W_0(\frac{2}{\pi \epsilon^2 })}
    \end{align*}
    where $W_0$ is the principal branch of the Lambert $W$ function and the second to last step holds because $\frac{\delta^2}{\sigma_s^2}, \frac{2}{\pi \epsilon^2} \in \mathbb{R}$ and $\frac{2}{\pi \epsilon^2} \geq 0$ for a given probability $\epsilon$. If we know the value for $\epsilon$, $W_0(\frac{2}{\pi \epsilon^2 })$ can be solved for numerically. However, in general, the principal branch of the Lambert $W$ function has no closed-form solution, but was upper bounded by \citet{hoorfar_inequalities_2008} 
    \begin{equation*}
        W_0(x) \leq \ln \bigg( \frac{2x + 1}{1 + \ln(x+1)} \bigg)
    \end{equation*}
    for $x \ge -1/e$. Which means we can upper bound $\delta$ with:
    \begin{align}
        \delta &\leq \sqrt{\frac{\Sigma_t(s)}{N}} \sqrt{\ln \bigg( \frac{4 + \pi \epsilon^2}{\pi \epsilon^2 + \pi \epsilon^2 \ln (2 + \pi \epsilon^2) + \pi \epsilon^2 \ln (\pi \epsilon^2)} \bigg)} \\
        &\leq \frac{1}{\sqrt{N}} C_{\Sigma_t}(s, \epsilon)
    \end{align}
\end{proof}

We can now prove an intermediate lemma that bounds the deviation from the optimal policy for a finite ensemble (rather than an infinite one, as in Lemma 6.2)
\begin{lemma}
\label{lem:finite-bound}
    Let the student policy $\hat{\pi}_\infty$ be an ensemble of $N$ infinitely wide neural networks $\pi_\theta$ with Lipschitz continuous derivatives with respect to its parameters, distilled on an on-policy dataset $\mathcal{D} = S^{\pi_\beta}_{\mathcal{M|}_{C_{train}}}$ consisting of all the states encountered by the teacher in ${\mathcal{M|}_{C_{train}}}$. 
    
    If the teacher is optimal in the training tasks $C_{train}$ (but arbitrarily bad anywhere else), the deviation from the optimal policy for any test state $s' \in S^{\pi^*}_{\mathcal{M|}_{C_{test}}}$ is bounded with probability at least $1-\epsilon$, by:
    \begin{equation*}
        |\pi^*(s') - \hat{\pi}_\infty(s')| \le  \kappa C_\Theta(s') + \frac{1}{\sqrt{N}} C_{\Sigma_\infty}(s',\epsilon)
    \end{equation*}
    where $\kappa$ is the measure of discrepancy between  subgroup $B \le G$ and full group $G$ (see definition \ref{def:kappa}) and $C_{\Theta}, C_{\Sigma_\infty}$ depend on the state $s' \in S^{\pi^*}_{\mathcal{M|}_{C_{test}}}$, the NTK $\Theta$ (i.e. network architecture), and the dataset $\mathcal{D}$. Additionally, $C_{\Sigma_\infty}$ also depends on the NNGP kernel $\mathcal{K}$ (i.e. network initialisation) and the confidence level $\epsilon$.
\end{lemma}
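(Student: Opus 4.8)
The plan is to reduce the statement to Lemma~\ref{thrm:inv-bound} (the invariance bound for infinite ensembles) and Lemma~\ref{thrm:finite} (the finite-versus-infinite ensemble bound), using the symmetric structure of the GTI-ZSPT to connect a test state to a state in the distillation dataset. First I would observe that, since $C_{test} \subset C$, every test state $s' \in S^{\pi^*}_{\mathcal{M|}_{C_{test}}}$ lies in $S^{\pi^*}_{\mathcal{M|}_{C}}$, so by Definition~\ref{def:gti-zspt} there exist $g \in G$ and $s \in \bar{S}$ with $s' = \psi_S(g)s$; fix one such decomposition for each $s'$.

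The next step is to identify the distillation dataset with a \emph{full data-augmentation} set for the subgroup $B$. Because the teacher is optimal on $C_{train}$, it induces the same on-policy distribution there as $\pi^*$, so $\mathcal{D} = S^{\pi_\beta}_{\mathcal{M|}_{C_{train}}} = S^{\pi^*}_{\mathcal{M|}_{C_{train}}} = \{\psi_S(b)s : b \in B,\ s \in \bar{S}\}$ and the teacher's target at every $\psi_S(b)s \in \mathcal{D}$ equals $\pi^*(\psi_S(b)s)$. Since $\pi^*$ is $G$-invariant on $\bar{S}$ and $B \le G$, this target equals $\pi^*(s)$, so $\mathcal{D}$ with its targets is exactly the augmented dataset $\mathcal{T}_B$ built from the base set $\{(s,\pi^*(s)) : s \in \bar{S}\}$; note also that $e \in B$ gives $\bar{S} \subseteq \mathcal{D}$. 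With this identification, Lemma~\ref{thrm:inv-bound} applies to the infinite ensemble $\bar{\pi}_\infty$ distilled on $\mathcal{D}$ and yields $|\bar{\pi}_\infty(s) - \bar{\pi}_\infty(\psi_S(g)s)| \le \kappa\, C_\Theta(s)$ for all $g \in G$. Using the interpolation property of the infinite ensemble at $t \to \infty$ on the training inputs, together with $s \in \bar{S} \subseteq \mathcal{D}$, gives $\bar{\pi}_\infty(s) = \pi_\beta(s) = \pi^*(s)$; combined with the invariance $\pi^*(s') = \pi^*(\psi_S(g)s) = \pi^*(s)$ we obtain
$$
|\pi^*(s') - \bar{\pi}_\infty(s')| = |\bar{\pi}_\infty(s) - \bar{\pi}_\infty(s')| \le \kappa\, C_\Theta(s),
$$
and relabelling the constant as $C_\Theta(s')$ via the chosen decomposition establishes the infinite-ensemble version of the bound. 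Finally, invoking Lemma~\ref{thrm:finite} at $t = \infty$ and the triangle inequality,
$$
|\pi^*(s') - \hat{\pi}_\infty(s')| \le |\pi^*(s') - \bar{\pi}_\infty(s')| + |\bar{\pi}_\infty(s') - \hat{\pi}_\infty(s')| \le \kappa\, C_\Theta(s') + \tfrac{1}{\sqrt{N}}\, C_{\Sigma_\infty}(s',\epsilon)
$$
with probability at least $1-\epsilon$, which is the claim; the stated dependencies of $C_\Theta$ and $C_{\Sigma_\infty}$ on the NTK $\Theta$, the dataset $\mathcal{D}$, the NNGP kernel $\mathcal{K}$ and $\epsilon$ are inherited directly from those two lemmas.

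I expect the main obstacle to be bookkeeping rather than a deep technical step: one must check carefully that ``teacher optimal on $C_{train}$'' together with $G$-invariance of $\pi^*$ on $\bar{S}$ really does make the distilled targets on $\mathcal{D}$ coincide with a full augmentation under $B$ (consistent labels across each orbit), so that Lemma~\ref{thrm:inv-bound} is legitimately applicable; and that the $s$-dependence of $C_\Theta$ in that lemma is handled cleanly by committing to one decomposition $s' = \psi_S(g)s$ per test state. The probabilistic part is then a direct application of Lemma~\ref{thrm:finite}.
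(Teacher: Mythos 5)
Your proposal is correct and follows essentially the same route as the paper's proof: decompose each test state as $s' = \psi_S(g)\bar{s}$ with $\bar{s} \in \bar{S} \subset \mathcal{D}$, apply Lemma~\ref{thrm:inv-bound} together with the $t \to \infty$ interpolation of the infinite ensemble on the training targets and the $G$-invariance of $\pi^*$, and then combine with Lemma~\ref{thrm:finite} via the triangle inequality. Your explicit check that the distillation dataset with its teacher targets coincides with a full-augmentation set $\mathcal{T}_B$ with orbit-consistent labels is a slightly more careful bookkeeping of a step the paper leaves implicit, but it is not a different argument.
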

\begin{proof}
    Because we assume the teacher is optimal in the training tasks, our training dataset is actually $\mathcal{D} = S^{\pi_\beta}_{\mathcal{M|}_{C_{train}}} = S^{\pi^*}_{\mathcal{M|}_{C_{train}}}$. Furthermore, by definition of the GTI-ZSPT setting, we have for the states encountered by the optimal policy in $\mathcal{M}|_C$: $S^{\pi^*}_{\mathcal{M|}_{C}} = \{ \psi_S(g) s| g \in G, s \in \bar{S} \}$. Furthermore, we have for the states encountered by the optimal policy in  $\mathcal{M}|_{C_{train}}$: $\mathcal{M}|_C$: $S^{\pi^*}_{\mathcal{M|}_{C_{train}}} = \{ \psi_S(b) s| b \in B, s \in \bar{S} \}$ for $B \le G$. This means that for any state $s \in S^{\pi^*}_{\mathcal{M|}_{C}}$, there exists a symmetry transformation $g^{-1} \in G$ from $s$ to a state $\bar{s} \in \bar{S} \subset \mathcal{D}$ in the training dataset that leaves the policy invariant: 
    \begin{equation}
        \forall s \in S^{\pi^*}_{\mathcal{M|}_{C}}, \quad \exists g^{-1} \in G, \qquad s.t. \qquad \psi_S(g^{-1})s = \bar{s} \wedge \pi^*(s) = \pi^*(\bar{s}), \quad \text{for some } \bar{s} \in \mathcal{D}
    \end{equation}
    Since this holds for any state in $S^{\pi^*}_{\mathcal{M|}_{C}}$, it also holds for any state in $S^{\pi^*}_{\mathcal{M|}_{C_{test}}} \subset S^{\pi^*}_{\mathcal{M|}_{C}}$. 
    
    Now, Lemma 6.2 holds for any state $s \in S$ and any $g \in G$. So, if we choose $s = \bar{s}$ and $g$ such that $\psi_S(g) \bar{s} = s'$ for a testing state $s' \in S^{\pi^*}_{\mathcal{M|}_{C_{test}}}$, we have
    \begin{align*}
        \big|\bar{\pi}_t \big(\bar{s} \big) - \bar{\pi}_t \big(\psi_S(g) \bar{s} \big) \big| &\leq \kappa \: C_\Theta(\bar{s}) \\
        \big|\bar{\pi}_t \big(\bar{s} \big) - \bar{\pi}_t \big(s' \big) \big| &\leq \kappa \: C_\Theta(s')
    \end{align*}
    where we write that $C_\Theta$ is now a function of $s'\in S^{\pi^*}_{\mathcal{M|}_{C_{test}}}$ instead of $\bar{s} \in \mathcal{D}$, which we can do because there exists a one-to-one mapping between the two: $\bar{s} = \psi_S(g^{-1}) s'$. The above bound holds for any time $t$. If we choose $t \to \infty$, we have that the infinite ensemble of infinitely wide neural networks $\bar{\pi}_t$ trained on $\mathcal{D}$, will converge to $\bar{\pi}_\infty(\bar{s}) = \pi_\beta(\bar{s}) = \pi^*(\bar{s}), \: \forall \bar{s} \in \mathcal{D}$. Furthermore, due to our choice of $g \in G$, we have that $\bar{\pi}_\infty(\bar{s}) =  \pi^*(\bar{s}) = \pi^*(s')$, and the bound becomes
    \begin{align*}
        \big|\pi^* \big(s' \big) - \bar{\pi}_\infty \big(s' \big) \big| &\leq \kappa \: C_\Theta(s'), \qquad \forall s' \in S^{\pi^*}_{\mathcal{M|}_{C_{test}}} 
    \end{align*}

    This bounds the output of the infinite ensemble after training $\bar{\pi}_\infty$, evaluated in a testing state $s' \in S^{\pi^*}_{\mathcal{M|}_{C_{test}}}$, to the optimal policy in that state. We can now combine this bound with our Lemma \ref{thrm:finite} above to bind the policy of a finite ensemble to the optimal policy in any testing state:
    \begin{align*}
        |\pi^*(s') - \hat{\pi}_\infty(s')| &= |\pi^*(s') - \bar{\pi}_\infty(s') + \bar{\pi}_\infty(s') - \hat{\pi}_\infty(s')|\\
        &\le |\pi^*(s') - \bar{\pi}_\infty(s')| + |\bar{\pi}_\infty(s') - \hat{\pi}_\infty(s')| \\
        &\le \kappa \: C_\Theta(s') + |\bar{\pi}_\infty(s') - \hat{\pi}_\infty(s')| \\
        &\le \kappa \: C_\Theta(s') +  \frac{1}{\sqrt{N}} C_{\Sigma_\infty}(s', \epsilon) \quad \text{with probability } \ge 1-\epsilon, \quad \forall s' \in S^{\pi^*}_{\mathcal{M|}_{C_{test}}} \\
    \end{align*}
\end{proof}

\subsection{Performance during testing}
We can now use Theorem 3 from \citet{maran_tight_2023} to prove a performance bound for our student policy $ \hat{\pi}_\infty(s')$ in the testing CMDP $\mathcal{M|}_{C_{test}}$ in terms of the Wasserstein distance between the student and optimal policy in this testing CMDP:
\begin{restatable}[Theorem 3]{namedtheorem}{ThmMaran}
        Let $\pi^*$ be the optimal policy and $\hat{\pi}_\infty$ be the student policy. If the CMDP is $(L_T, L_R)$-Lipschitz continuous and the optimal and student policies are $L_\pi$-Lipschitz continuous, and we have that $\gamma L_T (1 + L_{\hat{\pi}_\infty}) < 1$, then it holds that:
        \begin{equation*}
            J^{\pi^*} - J^{\hat{\pi}_\infty} \leq \frac{L_{R}}{(1-\gamma) (1 - \gamma L_T(1+L_{\hat{\pi}_\infty}))} \mathbb{E}_{s \sim d^{\pi^*}} [\mathcal{W}(\pi^*(\cdot|s), \hat{\pi}_\infty(\cdot|s))]
        \end{equation*}
        where $d^{\pi^*}(s) = (1-\gamma) \sum_{t=0}^\infty \gamma^t \mathbb{P}(s_t=s|\pi^*, p_0)$ is the $\gamma$-discounted visitation distribution and $\gamma$ the  the discount factor.
\end{restatable}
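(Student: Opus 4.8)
The statement above is Theorem~3 of \citet{maran_tight_2023}, so the plan is to reproduce its argument: a performance-difference estimate sharpened by a Lipschitz bound on the action-value function together with Kantorovich--Rubinstein duality. First I would apply the performance difference lemma, $J^{\pi^*} - J^{\hat{\pi}_\infty} = \frac{1}{1-\gamma}\,\mathbb{E}_{s \sim d^{\pi^*}}\,\mathbb{E}_{a \sim \pi^*(\cdot|s)}[A^{\hat{\pi}_\infty}(s,a)]$, and then rewrite the advantage using $V^{\hat{\pi}_\infty}(s) = \mathbb{E}_{a \sim \hat{\pi}_\infty(\cdot|s)}[Q^{\hat{\pi}_\infty}(s,a)]$, so that the inner term becomes $\mathbb{E}_{a \sim \pi^*(\cdot|s)}[Q^{\hat{\pi}_\infty}(s,a)] - \mathbb{E}_{a \sim \hat{\pi}_\infty(\cdot|s)}[Q^{\hat{\pi}_\infty}(s,a)]$ --- a difference of expectations of the \emph{same} test function $Q^{\hat{\pi}_\infty}(s,\cdot)$ under two action distributions, which is exactly what a $1$-Wasserstein distance controls. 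If $Q^{\hat{\pi}_\infty}(s,\cdot)$ is $L_Q$-Lipschitz in the action, Kantorovich--Rubinstein duality bounds this inner term in absolute value by $L_Q\,\mathcal{W}(\pi^*(\cdot|s), \hat{\pi}_\infty(\cdot|s))$, and since $J^{\pi^*} - J^{\hat{\pi}_\infty} \ge 0$ the whole expression is at most $\frac{L_Q}{1-\gamma}\,\mathbb{E}_{s \sim d^{\pi^*}}[\mathcal{W}(\pi^*(\cdot|s), \hat{\pi}_\infty(\cdot|s))]$.

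The crux is therefore the Lipschitz constant $L_Q$ of $Q^{\hat{\pi}_\infty}$. From the Bellman evaluation equation $Q^{\hat{\pi}_\infty}(s,a) = R(s,a) + \gamma\,\mathbb{E}_{s' \sim T(\cdot|s,a)}[V^{\hat{\pi}_\infty}(s')]$, using the product metric $d((s,a),(s',a')) = d_S(s,s') + d_A(a,a')$ on $S \times A$: the reward term contributes $L_R$; the transition term contributes $\gamma L_T L_V$ (bound the difference of $V^{\hat{\pi}_\infty}$-expectations by $L_V\,\mathcal{W}(T(\cdot|s_1,a_1),T(\cdot|s_2,a_2)) \le L_V L_T\, d$), where $L_V$ is the Lipschitz constant of $V^{\hat{\pi}_\infty}$; and an optimal coupling of $\hat{\pi}_\infty(\cdot|s_1),\hat{\pi}_\infty(\cdot|s_2)$ together with $L_{\hat{\pi}_\infty}$-Lipschitzness of the student policy gives $L_V \le L_Q(1 + L_{\hat{\pi}_\infty})$. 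Chaining these yields $L_Q \le L_R + \gamma L_T(1+L_{\hat{\pi}_\infty})\,L_Q$, which $\gamma L_T(1+L_{\hat{\pi}_\infty}) < 1$ lets me solve as $L_Q \le L_R/(1 - \gamma L_T(1+L_{\hat{\pi}_\infty}))$; substituting into the display above reproduces the claimed bound. I expect this to be the main obstacle, since the estimate on $L_Q$ is circular as written: to make it rigorous I would argue via a fixed point, showing that the evaluation operator $\mathcal{T}^{\hat{\pi}_\infty}$ maps the complete space of bounded $L$-Lipschitz functions into itself for every $L \ge L_R/(1 - \gamma L_T(1+L_{\hat{\pi}_\infty}))$ and is a $\gamma$-contraction in sup norm, so that its unique fixed point $Q^{\hat{\pi}_\infty}$ actually lies in that space. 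Care is also needed with the joint metric, boundedness of $R$, and the separability hypotheses behind Kantorovich--Rubinstein and the couplings.

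To then close the proof of Theorem~\ref{thrm:main} one specialises this bound. Because the teacher and student are deterministic scalar policies, $\mathcal{W}(\pi^*(\cdot|s), \hat{\pi}_\infty(\cdot|s)) = |\pi^*(s) - \hat{\pi}_\infty(s)|$, and $d^{\pi^*}$ --- the $\gamma$-discounted visitation of the optimal policy in $\mathcal{M}|_{C_{test}}$ --- is supported on $S^{\pi^*}_{\mathcal{M}|_{C_{test}}}$, where Lemma~\ref{lem:finite-bound} gives $|\pi^*(s) - \hat{\pi}_\infty(s)| \le \kappa\,C_\Theta(s) + \frac{1}{\sqrt{N}}C_{\Sigma_\infty}(s,\epsilon)$ with probability at least $1-\epsilon$. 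Taking the expectation over $s \sim d^{\pi^*}$, setting $\bar{C}_\Theta = \mathbb{E}_{s \sim d^{\pi^*}}[C_\Theta(s)]$ and $\bar{C}_{\Sigma_\infty}(\epsilon) = \mathbb{E}_{s \sim d^{\pi^*}}[C_{\Sigma_\infty}(s,\epsilon)]$, and inserting into the Wasserstein bound yields \eqref{eq:thm1}. The one subtlety is that Lemma~\ref{lem:finite-bound} is a pointwise-in-$s$ high-probability statement, so I would either invoke a version uniform over the support of $d^{\pi^*}$ or state the conclusion as holding in expectation at the stated confidence level.
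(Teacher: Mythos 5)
Your proposal is correct and follows essentially the same route as the paper's proof: the performance difference lemma, Kantorovich--Rubinstein duality applied to the test function $Q^{\hat{\pi}_\infty}(s,\cdot)$, and the Lipschitz bound $L_{Q} \le L_R/(1-\gamma L_T(1+L_{\hat{\pi}_\infty}))$. The only difference is that the paper imports that Lipschitz bound from \citet{rachelson_locality_2010}, whereas you sketch its derivation yourself via the Bellman fixed-point argument, which is exactly the argument behind the cited result.
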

\begin{proof}
    For completeness we repeat the proof in our notation in Appendix \ref{app:proof_maran}.
\end{proof}

With this, we can finally prove the main theorem:
\MainTheorem*
\begin{proof}
    We have for deterministic polices that the Wasserstein distance reduces to
    \begin{equation*}
        \mathcal{W}(\pi^*(\cdot|s), \hat{\pi}_\infty(\cdot|s)) = |\pi^*(s) -  \hat{\pi}_\infty(s)|
    \end{equation*}
    So, if we invoke Theorem 3 from \citet{maran_tight_2023} on the testing CMDP $\mathcal{M|}_{C_{test}}$, we can use Lemma \ref{lem:finite-bound} to show
    \begin{align*}
        J^{\pi^*} - J^{\hat{\pi}_\infty} &\leq \frac{L_{R}}{(1-\gamma) (1 - \gamma L_T(1+L_{\hat{\pi}_\infty}))} \mathbb{E}_{s \sim d^{\pi^*}} [\mathcal{W}(\pi^*(\cdot|s), \hat{\pi}_\infty(\cdot|s))]  \\
        & \le \frac{L_{R}}{(1-\gamma) (1 - \gamma L_T(1+L_{\hat{\pi}_\infty}))} \mathbb{E}_{s \sim d^{\pi^*}} [|\pi^*(s) -  \hat{\pi}_\infty(s)|] \\
        & \le \frac{L_{R}}{(1-\gamma) (1 - \gamma L_T(1+L_{\hat{\pi}_\infty}))} \left( \kappa \bar{C}_{\Theta} + \frac{1}{\sqrt{N}} \bar{C}_{\Sigma_\infty}(\epsilon)\right)
    \end{align*}
    where $\bar{C}_{\Theta} = \mathbb{E}_{s \sim d^{\pi^*}} [C_\Theta (s)]$ depends on the NTK $\Theta$ (i.e. network architecture) and $C_{\Sigma_\infty}(\epsilon) = \mathbb{E}_{s \sim d^{\pi^*}} [C_{\Sigma_\infty}(s,\epsilon)]$ depends on the NNGP kernel $\mathcal{K}$ (i.e. network initialisation). 
\end{proof}

\newpage
\section{Experimental details}
\label{app:experimental-details}
The code for all the experiments in the main text can be found at \url{https://github.com/MWeltevrede/distillation-after-training}. 

\subsection{'Reacher with rotational symmetry' CMDP}
\label{app:exp-illustrative}
In the 'Reacher with rotational symmetry' CMDP from Figure \ref{fig:illustrative}, the state $s = (x_{s},y_s,x_e,y_e,x_h,y_h)$ consists of the 2D Euclidean coordinates of the shoulder $(x_s, y_s)$, elbow $(x_e, y_e)$ and hand $(x_h, y_h)$ centred around the target location, and the continuous 2D action space consists of the torque to rotate the shoulder and elbow joints. The episode terminates and the agent receives a reward of 1 if the hand of the robot arm is within a small area around the target location. Elsewhere, the reward function equals $\frac{1 - 0.5 d_{target}}{0.5 T} \delta_{d_{target} < d_{min}}$, where $d_{target}$ is the distance between the target location and hand, $T = 200$ is the maximum number of steps before timeout, and $\delta_{d_{target} < d_{min}}$ is 1 only when the current $d_{target}$ is smaller than the minimal distance $d_{min}$ to target achieved in that episode, and 0 otherwise. In the experiments from Section \ref{sec:exp-illustrative}, policies are distilled on datasets collected by rolling out trajectories from a teacher agent in a fixed set of training contexts. Ensembles are created by independently distilling $N$ policies (with different seeds) and afterwards evaluating by averaging over the output of the $N$ polices.

\subsubsection{Satisfying the assumptions for the GTI-ZSPT setting}
The 'Reacher with rotational symmetry' CMDP from Figure \ref{fig:illustrative} satisfies the symmetric structure assumed in the GTI-ZSPT setting. To illustrate this, we could define the states encountered by the optimal policy in context 1 in Figure \ref{fig:illustrative} as the subset of states $\bar{S}$ in the GTI-ZSPT definition. With that definition we can see that subgroup $B=C_4$ would generate all the states in $S^{\pi^*}_{\mathcal{M}|_{C_{train}}}$, and full group $G=SO(2)$ would generate all the states in $S^{\pi^*}_{\mathcal{M}|_{C}}$. 

For the states $s = (x_{s},y_s,x_e,y_e,x_h,y_h)$, the representation $\psi_S(\alpha)$ for a rotation with angle $\alpha$ is the block diagonal matrix:
\begin{equation*}
\psi_S(\alpha) = \left[ 
    \begin{array}{cccccc}
    \cos \alpha & - \sin \alpha & 0 & 0 & 0 & 0 \\ 
    \sin \alpha & \cos \alpha & 0 & 0 & 0 & 0 \\ 
     0 & 0 & \cos \alpha & - \sin \alpha & 0 & 0 \\ 
     0 & 0 & \sin \alpha & \cos \alpha & 0 & 0 \\ 
     0 & 0 & 0 & 0 & \cos \alpha & - \sin \alpha \\ 
     0 & 0 & 0 & 0 & \sin \alpha & \cos \alpha \\ 
    \end{array} 
\right]
\end{equation*}
which is orthogonal. 

Additionally, the CMDP is $L_T$-Lipschitz continuous since there are no collisions causing non-smooth transitions. Moreover, with a proper choice of reward function (for example, $R = \frac{1}{d_{target}}$), it is also $L_R$-Lipschitz continuous. Note that for our experiments, we choose a non smooth reward function since it helped with training the teacher.

\subsubsection{Figure \ref{fig:illustrative} \& Table \ref{table:illustrative_results}}
In the setting from Figure \ref{fig:illustrative} and Table \ref{table:illustrative_results}, the contexts only differ in the location of the shoulder. The robot arm pose always starts at a $45^\circ$ degree angle for the shoulder joint (counter-clockwise with respect to an axis drawn from the shoulder to the target), and a $90^\circ$ degree angle for the elbow joint (clockwise with respect to an axis drawn from the shoulder to elbow). In the testing distribution, the shoulder can be located anywhere along a circle around the target location, but the starting pose is always the same.

For the training contexts, the shoulder is located at different, evenly spaced, intervals around the $360^\circ$ circle. In the bottom half of Table \ref{table:illustrative_results}, we train on three different data sets denoted by the corresponding subgroups of $SO(2)$ that generate the training contexts: $C_2, C_4$ and $C_8$. For each of the datasets, we always context 1 in Figure \ref{fig:illustrative} as the base context, and apply various rotations to generate the other training contexts. The $C_2$ set consists of context 1 together with the subgroup of $180^\circ$ rotations (the $0^\circ$ and $180^\circ$ rotations, resulting in context 1 and context 3 in Figure \ref{fig:illustrative}). The set $C_4$ consists of the $90^\circ$ rotations and the corresponding four training contexts are depicted in Figure \ref{fig:illustrative}. Lastly, the $C_8$ set consists of the $45^\circ$ rotations, half of which are the $90^\circ$ rotations from Figure \ref{fig:illustrative}, and the other half are the $45^\circ$ rotations in between those. Note that for the results of varying ensemble size in the top half of Table \ref{table:illustrative_results}, we used the $C_4$ dataset. 

The teacher policy is a handcrafted policy ($a = (-2,2)$ for 12 steps and $a =(2,2)$ afterwards) that is optimal for the starting pose considered in this setting. The neural network consists of three fully connected hidden layers of size $[64, 64, 32]$ with ReLU activation functions. The policy is distilled with the MSE loss in \eqref{eq:distil-vector} (which is the same loss as \eqref{eq:distillation} but for vector-valued actions instead of scalar). The exact hyperparameters can be found in Table \ref{tab:par-ill}.

\begin{table}[H]
\centering
\caption{Hyper-parameters used for the 'Reacher with rotational symmetry' CMDP experiments}
\label{tab:par-ill}
\begin{tabular}{@{}ll@{}}
\toprule
\multicolumn{2}{c}{\large \textbf{'Reacher with rotational symmetry'}} \\ \midrule
\textbf{Hyper-parameter}               & \textbf{Value}     \\ \midrule
Epochs          & 500                   \\
Batch size              &         6              \\
Learning rate                         & $1 \times 10^{-4}$ \\
\end{tabular}
\end{table}

\subsubsection{Figure \ref{fig:illustrative-full} \& Table \ref{table:illustrative_diverse_data}}
In the setting from Figure \ref{fig:illustrative-full} and Table \ref{table:illustrative_diverse_data}, the contexts not only differ in the shoulder location (as described in the subsection above), but also in the starting pose of the robot arm. For testing, a random shoulder location and starting pose are sampled for each episode. 

There are four training contexts in this setting, whose shoulder location correspond to the $C_4$ dataset described above, but whose initial arm poses are sampled randomly by sampling two angles between 0 and 360 degrees for the shoulder and elbow joint. Each seed has its own set of random training poses (but the same shoulder location). The dataset created from these four training contexts is referred to as the \emph{Training Contexts} dataset in Table \ref{table:illustrative_diverse_data}. The \emph{Training Contexts + $C_4$} dataset essentially consists of 16 training contexts, four of which are the ones from Training Contexts, and the other 12 are the random poses from the Training Contexts contexts, duplicated for each of the other shoulder locations. Figure \ref{fig:illustrative-full-c4} illustrates this for an example set of four Training Contexts. The \emph{Training Contexts + Random} dataset is the same as the Training Contexts + $C_4$ set, except that instead of duplicating the random poses from the four Training Contexts, 12 new random poses are sampled. 

For this more complicated version of the 'Reacher with rotational symmetry' CMDP, it is much more convoluted to handcraft an optimal policy. Instead, we train an soft actor-critic agent \citep[SAC][]{haarnoja_soft_2018} with the Stable-Baselines3 \citep{raffin_stable-baselines3_2021} implementation on the full context distribution, to get close to an optimal policy for any context. The network for the SAC teacher consists of two fully connected hidden layers of size $[400, 300]$ with ReLU activation functions. The other hyperparameters for the SAC agent can be found in \ref{table:par-sac}. Note that for the results in Table \ref{table:illustrative_diverse_data}, we evaluate single distilled policies ($N=1$) and we use the same network and hyperparameters for distillation as the experiments for Table \ref{table:illustrative_results}. 

\begin{table}[H]
\centering
\caption{Hyper-parameters used for the SAC teacher agent in the 'Reacher with rotational symmetry' CMDP}
\label{table:par-sac}
\begin{tabular}{@{}ll@{}}
\toprule
\multicolumn{2}{c}{\large \textbf{SAC Teacher}} \\ \midrule
\textbf{Hyper-parameter}               & \textbf{Value}     \\ \midrule
Total timesteps                       & 500 000            \\
Buffer size                           & 300 000            \\
Batch size                            & 256                \\
Discount factor $\gamma$              & 0.99               \\
Gradient steps                        & 64                  \\
Train frequency (steps)               & 64                 \\
Target update interval (steps)        & 1                 \\
Target soft update coefficient $\tau$ & 0.02             \\
Warmup phase                           & 10 000         \\
Share feature extractor             &  False            \\
Target entropy                          & auto          \\
Entropy coeff                         & auto            \\ 
Use State Dependent Exploration (gSDE)           & True            \\ \midrule
\multicolumn{2}{c}{\textbf{Adam}}                          \\
Learning rate                         & $5 \times 10^{-4}$ \\ \midrule   
\end{tabular}
\end{table}

\begin{figure}[H]
    \centering
    \includegraphics[width=0.8\textwidth]{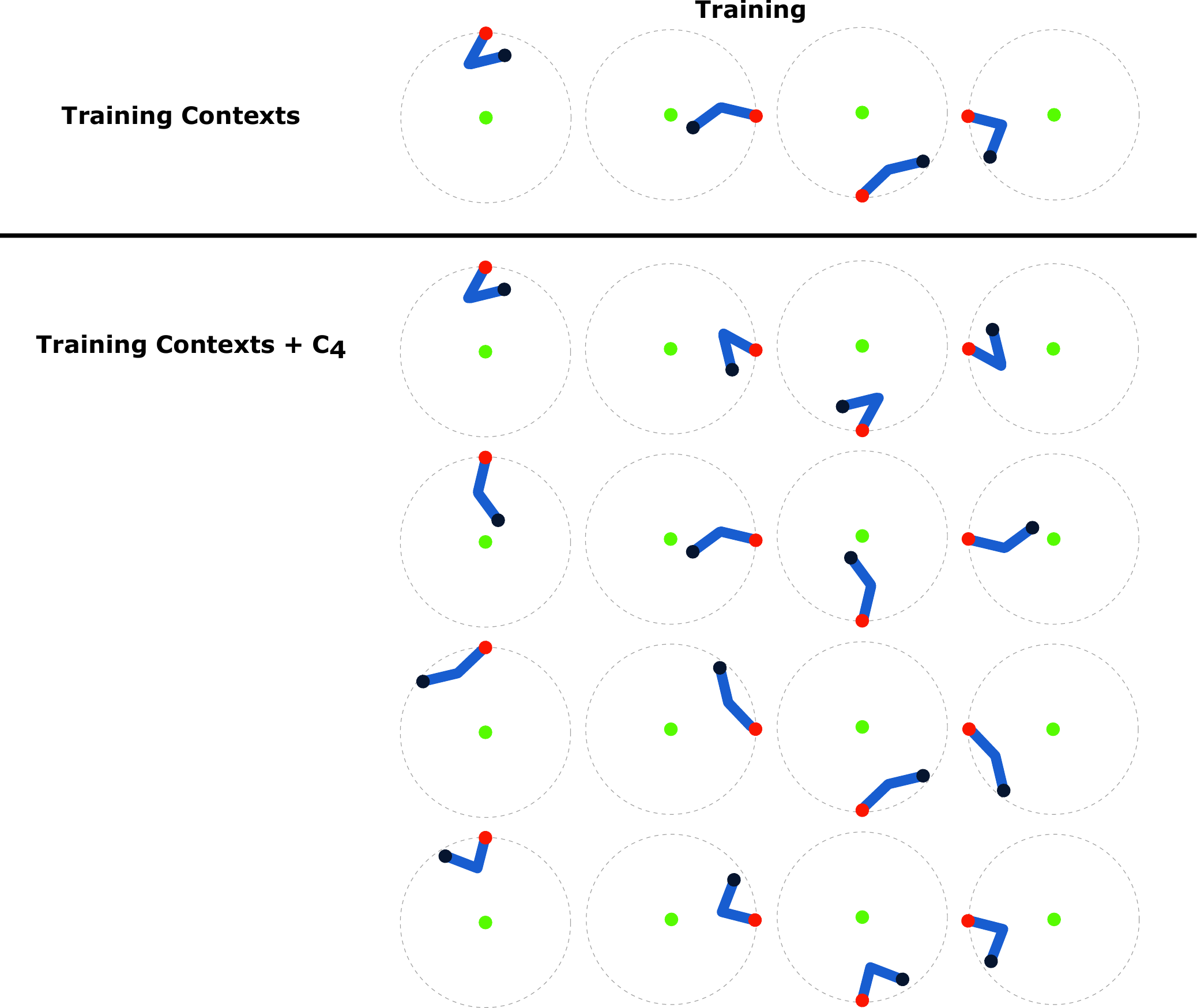}
    \caption{The Training Contexts and Training Contexts + $C_4$ context sets in the 'Reacher with rotational symmetry' reacher CMDP with varying shoulder location (red) \emph{and} robot arm pose (blue), see Figure \ref{fig:illustrative} for details. }
    \label{fig:illustrative-full-c4}
\vspace{-1mm}
\end{figure}

\subsection{Four Rooms}
\label{app:exp-four-rooms}
In the Four Rooms environment (Figure \ref{fig:four-rooms-full}), an agent (red triangle) starts in a random location and facing a random direction, and has to move to the goal location (green square) whilst navigating the doorways connecting the four rooms. We modify the original Minigrid implementation a little bit, by reducing the action space from the default seven (turn left, turn right, move forward, pick up an object, drop an object, toggle/activate an object, end episode) to only the first three (turn left, turn right, move forward). Moreover, we use a reward function that gives a reward of 1 when the goal is reached and zero elsewhere, which differs from slightly from the default one that gives $1 - 0.9 * (\frac{\text{step count}}{\text{max steps}})$ for reaching the goal. Lastly, our implementation of the Four Rooms environment allows for more control over the context dimensions, allowing for the construction of distinct training and testing sets. Our version of the Four Room environment can be found at \texttt{<redacted for review>}. 

\begin{figure}[H]
    \centering
    \includegraphics[width=0.8\textwidth]{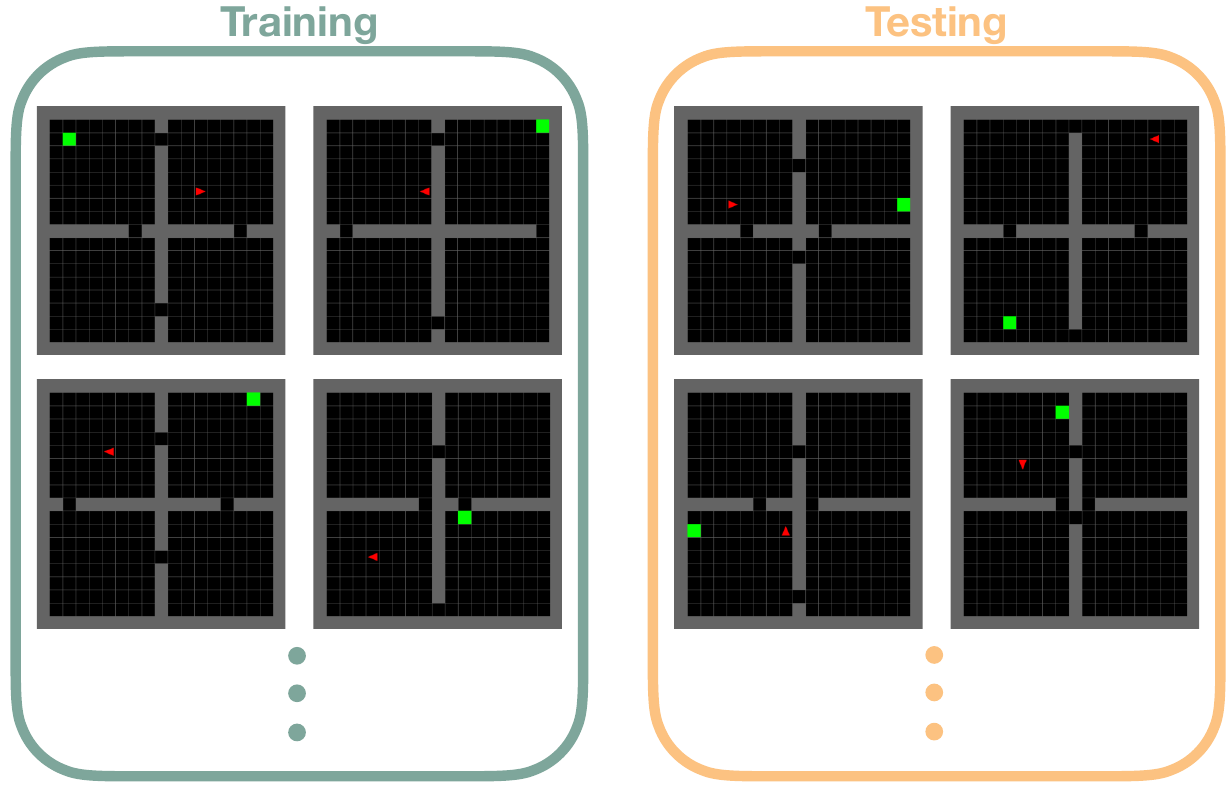}
    \caption{Example of Four Rooms training and testing contexts.}
    \label{fig:four-rooms-full}
\end{figure}

In this environment, the contexts differ in the topology of the doorways connecting the four rooms, the initial location of the agent, the initial direction the agent is facing, and the goal location. The teacher is the PPO+Explore-Go agent from \citet{weltevrede_exploration_2025}, that is trained on 200 training contexts. We used separate validation and testing sets consisting of unseen contexts of size 40 and 200 respectively. The validation set was used for algorithm development and hyperparameter tuning, and the test set was only used as a final evaluation (and is reported in Table \ref{table:fr_results}). Since the optimal policy in Four Rooms is deterministic, we also evaluate performance of our policies deterministically by always taking the action with maximum probability in a given state. 

The \emph{Teacher} dataset used for distillation and behaviour cloning, simply consists of the states encountered by rolling out the stochastic teacher policy in the 200 training contexts until the desired dataset size was reached. We use a dataset size of 500.000, since this was the replay buffer size of the DQN agent from \citet{weltevrede_exploration_2025}. The \emph{Explore-Go} dataset mimics what a rollout buffer would look like for the PPO+Explore-Go teacher. It is created by running a pure exploration agent (trained as part of the PPO+Explore-Go teacher) for $k$ steps at the beginning of each episode, and afterwards rolling out the stochastic teacher policy until termination of the episode. The number of pure exploration steps $k$ is sampled uniformly from a range $[0,K)$, where $K=50$ (the same as was used in \citet{weltevrede_exploration_2025}). The pure exploration experience is not added to the dataset, only the states encountered by the teacher. The Explore-Go dataset also has size 500.000, but requires additional interactions with the environment (on average $\frac{K}{2}=25$ steps per episode) that are not added to the dataset. Since the average episode length of the teacher is of similar size, the Explore-Go dataset requires roughly twice the dataset size of additional environment steps to create. Lastly, the \emph{mixed} dataset is a 50/50 mixture of a Teacher dataset of size 250.000, and a dataset created by rolling out the pure exploration policy for 250.000 steps. We generate 20 PPO+Explore-Go teachers, and generate one dataset of each type per teacher (for a total of 20 datasets). 

An important thing to note, is that although the states for the distillation and behaviour cloning experiments are the same, the learning targets are not. This has the biggest effect on the Mixed dataset, where the learning targets for behaviour cloning are the actions that were taken to create the dataset, and for distillation, the targets are the PPO+Explore-Go teacher's probabilities in the given state (independent of what action was taken in that state during the creation of the dataset).

The Four Room experiments were executed on a computer with an NVIDIA RTX 3070 GPU, Intel Core i7 12700 CPU and 32 GB of memory. Training of the teacher (PPO agent) would take approximately 2 hours, and a single distillation run would take approximately 10 minutes. The code for our experiments can be found at \texttt{<redacted for review>}.

\subsubsection{Implementation details}
For the distillation experiments (top of Table \ref{table:fr_results}), we used the same architecture as the teacher in \citet{weltevrede_exploration_2025}. We distil the stochastic teacher policy by regressing on the probabilities (as in Equation \ref{eq:distil-prob}). We found this to work significantly better than alternative loss functions, since the normalised range for the targets helps the averaging in the ensemble. We tune the distillation hyperparameters by performing a grid search over the following values
\begin{itemize}
    \item \textbf{Learning rate}: $\{1\times10^{-4}, 1\times10^{-3}, 1\times10^{-2}\}$
    \item \textbf{Batch Size}: $\{64, 256, 512, 1024, 2048\}$
    \item \textbf{Epochs}: $\{10, 20, 30, 40, 50, 60, 70, 80, 90, 100\}$
\end{itemize}
We performed the tuning for a single teacher seed by, for each dataset type, splitting the dataset into a training set (sampled from the first 150 training contexts) and validation set (sampled from the other 50 training contexts), and choosing the combination of hyperparameters that minimised the distillation loss on the validation set. The final results are distilled on the full datasets and evaluated in the testing contexts.  The final hyperparameters can be found in Table \ref{table:par-fr-distil}.

\begin{table}[H]
\centering
\caption{Hyperparameters used for policy distillation in the Four Rooms environment.}
\label{table:par-fr-distil}
\begin{tabular}{@{}ll@{}}
\toprule
\multicolumn{2}{c}{\large \textbf{Four Rooms Distillation}} \\ \midrule
\textbf{Hyper-parameter}               & \textbf{Value}     \\ \midrule
\multicolumn{2}{c}{\textbf{Teacher}}                          \\
Epochs                        & 100            \\
Batch size                           & 64            \\
Learning rate                            & $1\times10^{-4}$                 \\ \midrule
\multicolumn{2}{c}{\textbf{Explore-Go}}                          \\
Epochs                        & 50            \\
Batch size                           & 512            \\
Learning rate                            & $1\times10^{-3}$                 \\ \midrule
\multicolumn{2}{c}{\textbf{Mixed}}                          \\
Epochs                        & 50            \\
Batch size                           & 256            \\
Learning rate                            & $1\times10^{-3}$                 \\ \midrule
\end{tabular}
\end{table}

For the behaviour cloning experiments (bottom of Table \ref{table:fr_results}), we used the same architecture as for the distillation experiments. We trained using the logarithmic BC loss for stochastic policies (Equation \eqref{eq:bc-log}). We also tuned the behaviour cloning in the same way as the distillation policies above, but with a smaller range for the epochs: $\{1,2,3,4,5,6,7,8,9,10\}$, since we found it would overfit much sooner than the distillation experiments. The final hyperparameters can be found in Table \ref{table:par-fr-bc}.

\begin{table}[H]
\centering
\caption{Hyperparameters used for behaviour cloning in the Four Rooms environment.}
\label{table:par-fr-bc}
\begin{tabular}{@{}ll@{}}
\toprule
\multicolumn{2}{c}{\large \textbf{Four Rooms Behaviour Cloning}} \\ \midrule
\textbf{Hyper-parameter}               & \textbf{Value}     \\ \midrule
\multicolumn{2}{c}{\textbf{Teacher}}                          \\
Epochs                        & 1            \\
Batch size                           & 64            \\
Learning rate                            & $1\times10^{-3}$                \\ \midrule
\multicolumn{2}{c}{\textbf{Explore-Go}}                          \\
Epochs                        & 1            \\
Batch size                           & 64            \\
Learning rate                            & $1\times10^{-3}$                 \\ \midrule
\multicolumn{2}{c}{\textbf{Mixed}}                          \\
Epochs                        & 2            \\
Batch size                           & 256            \\
Learning rate                            & $1\times10^{-3}$                 \\ \midrule
\end{tabular}
\end{table}

\newpage 
\section{Additional experiments}
\subsection{Measure of invariance}
\label{app:invariance}
In order to identify invariance as a key factor for the increased performance in the 'Reacher with rotational symmetry' environment from Table \ref{table:illustrative_results}, we will measure how invariant the policy becomes when increasing ensemble or subgroup size. As a measure of invariance we use the variance of the network's output across the group orbit \citep{kvinge_what_2022} (for a completely invariant network, this should be zero). In practical terms, we evaluate each policy on all (360) integer degree rotations of the starting state (the orbit) and compute the total variation (trace of the covariance matrix) over the produced outputs. In Figure \ref{fig:measure_of_invariance}, we see that for both increased ensemble and subgroup size, as the test performance increases, so does the measure of invariance decrease.

\begin{figure}[H]
\vspace{-0mm}
    \centering
    \begin{subfigure}[b]{.45\textwidth}
        \centering
        \includegraphics[width=\textwidth]{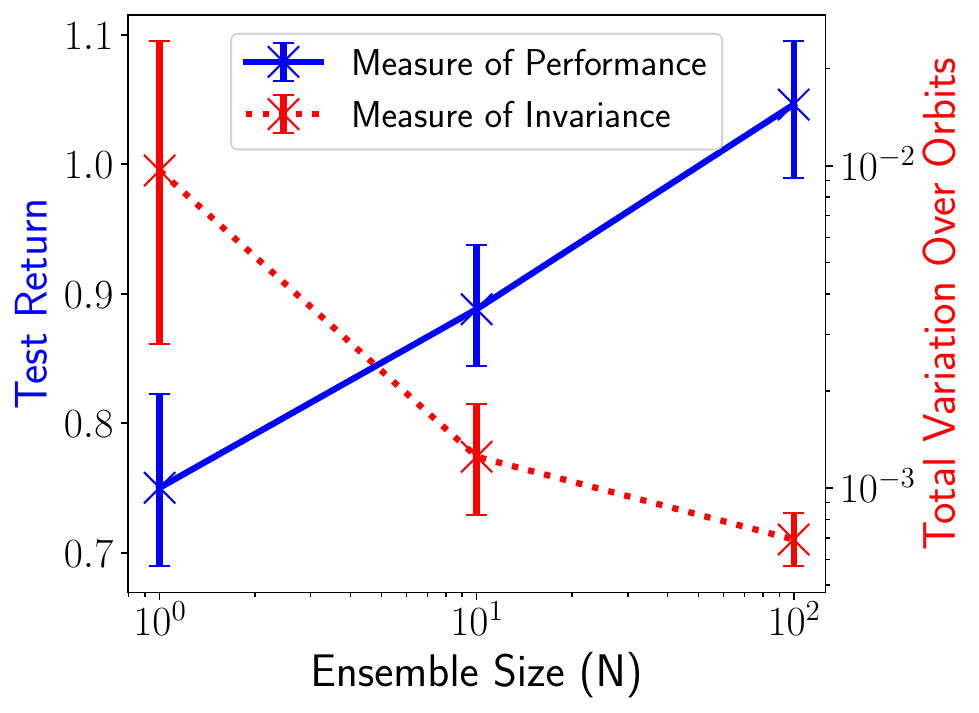}
        \par\vspace{-0mm}
        \caption{}
    \end{subfigure}
    \begin{subfigure}[b]{.45\textwidth}
        \centering
        \includegraphics[width=\textwidth]{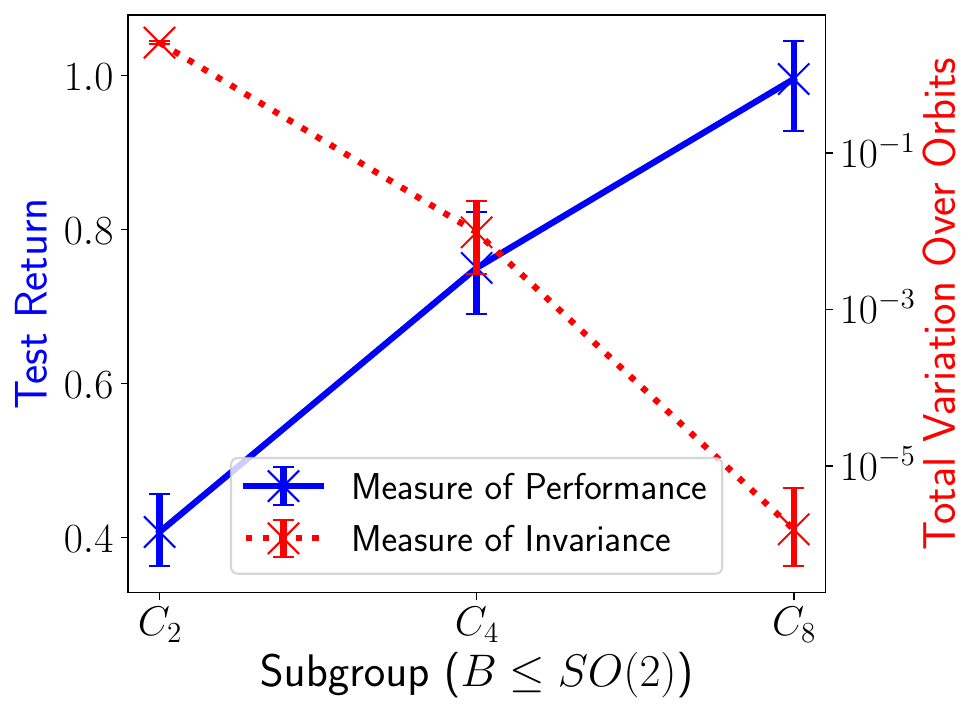}
        \par\vspace{-0mm}
        \caption{}
    \end{subfigure}
    \vspace{-0mm}
    \caption{Test return (left axis) compared with the total variation (trace of the covariance matrix) over orbits of the $SO(2)$ group of rotations (right axis) for (a) different ensemble sizes and (b) subgroups $B \le SO(2)$. The total variation is a measure of how invariant the agent has become with respect to rotations, zero total variation would correspond to perfect invariance. Shown are the mean and 95\% confidence intervals over 20 seeds.}
    \label{fig:measure_of_invariance}
    \vspace{-0mm}
\end{figure}

\subsection{$\frac{1}{\sqrt{N}}$ generalisation bound}
\label{app:tight_bound}
In this section, we investigate how well our results fit the $a + \frac{b}{\sqrt{N}}$ form of the generalisation bound from Theory \ref{thrm:main}. We evaluate different ensemble sizes on the 'Reacher with rotational symmetry' environment from Figure \ref{fig:illustrative}, trained on subgroup $B=C_4$ in Table \ref{table:ensemble_size_fit}. This table includes the results from Table \ref{table:illustrative_results}, as well as additional results for ensemble sizes $N=1000$ and $N=10.000$. In Table \ref{table:fr_ensemble_size_fit}, we repeat the results for different ensemble sizes distilled on the Explore-Go dataset in the Four Rooms environment from Table \ref{table:fr_results}, with the addition of ensemble size $N = 100$. 

In Figure \ref{fig:ensemble_size_fit}, we compare the difference to optimal performance $J^{\pi^*} - J^{\hat{\pi}_\infty}$ with ensemble size $N$, and plot the best fit to the $a + \frac{b}{\sqrt{N}}$ relation as predicted by our theory. The exact optimal performance $J^{\pi^*}$ is not necessarily known, but we estimate it by taking the average train performance instead (which seems to have converged on both environments). The best $a + \frac{b}{\sqrt{N}}$ fit is obtained by computing the optimal linear fit between $y = J^{\pi^*} - J^{\hat{\pi}_\infty}$ and $x' = \frac{1}{\sqrt{N}}$ (and then transforming the solution back to $x = N$). 

The results in Figure \ref{fig:ensemble_size_fit} seem to follow the $a + \frac{b}{\sqrt{N}}$ upper bound to some extend. The upper bound is likely not very tight, but the results seem to indicate the bound is also not vacuous. 

\begin{figure}[H]
\vspace{-0mm}
    \centering
    \begin{subfigure}[b]{.45\textwidth}
        \centering
        \includegraphics[width=\textwidth]{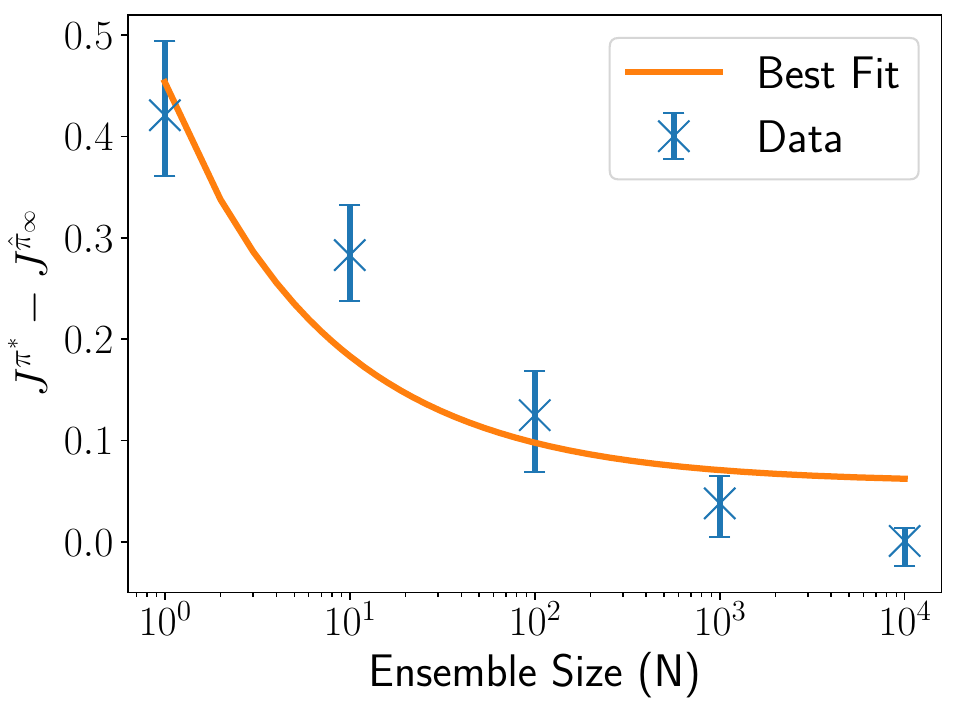}
        \par\vspace{-0mm}
        \caption{Reacher with rotational symmetry}
    \end{subfigure}
    \begin{subfigure}[b]{.45\textwidth}
        \centering
        \includegraphics[width=\textwidth]{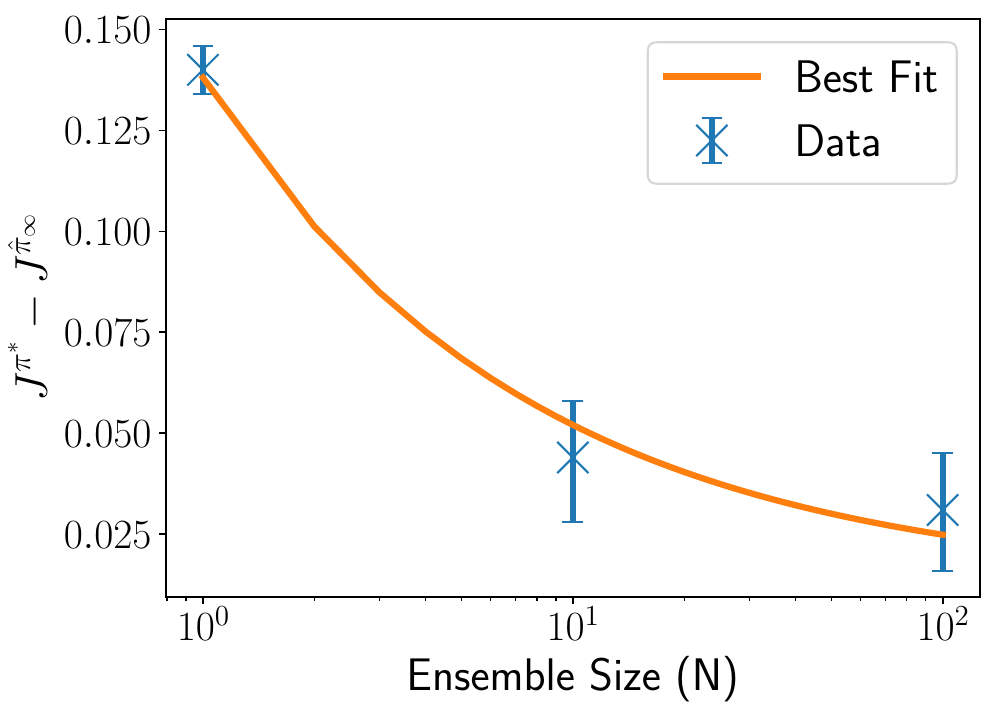}
        \par\vspace{-0mm}
        \caption{Four Rooms}
    \end{subfigure}
    \vspace{-0mm}
    \caption{Difference to optimal performance as a function of ensemble size $N$ in the testing contexts for the (a) Reacher with rotational symmetry and (b) Four Rooms environments. Shown are the mean and 95\% confidence intervals over 20 seeds from (a) Table \ref{table:ensemble_size_fit} and (b) Table \ref{table:fr_ensemble_size_fit}, together with the best possible fit for the relation $a + \frac{b}{\sqrt{N}}$ as predicted by our theory.}
    \label{fig:ensemble_size_fit}
    \vspace{-0mm}
\end{figure}

\begin{table}[H]
\vspace{-0mm}
  \caption{Performance of distilled policies in the Illustrative CMDP from Figure \ref{fig:illustrative} for different ensemble sizes $N$ (trained under subgroup $B=C_4$). Shown are the mean and standard deviation for 20 seeds, and in bold are the best returns including those with overlapping 95\% confidence intervals.}
  \label{table:ensemble_size_fit}
  \centering
  \setlength{\tabcolsep}{3.5pt}
  \begin{tabular}{llllll}
\toprule
\textbf{Ensemble Size $N$:}       & N=1         &     N=10     & N=100 & N=1000 & N=10.000      \\ \cmidrule(r){1-1}
Train Performance         & \textbf{1.17 $\pm$ 0.004} &   \textbf{1.17 $\pm$ 0.004}    & \textbf{1.17 $\pm$ 0.003} & \textbf{1.17 $\pm$ 0.003} & \textbf{1.17 $\pm$ 0.002} \\ 
Test Performance         & 0.75 $\pm$ 0.147 &   0.89 $\pm$ 0.107   & 1.05 $\pm$ 0.117 & 1.13 $\pm$ 0.072 & \textbf{1.17 $\pm$ 0.038} \\ \bottomrule
\end{tabular}
\end{table}

\begin{table}[H]
  \caption{Performance of an ensemble (of size $N$) of policy distillation or behaviour cloning policies on the Explore-Go dataset in the Four Rooms environment. Shown are mean and standard deviation over 20 seeds, and in bold are the best returns including those with overlapping 95\% confidence intervals.}
  \label{table:fr_ensemble_size_fit}
  \centering
  \setlength{\tabcolsep}{3.5pt}
  \begin{tabular}{llll}
\toprule
\textbf{Ensemble Size $N$:}       & N=1         &     N=10     & N=100      \\ \cmidrule(r){1-1}
Train Performance         & \textbf{0.92 $\pm$ 0.020} & \textbf{0.92 $\pm$ 0.019} & \textbf{0.92 $\pm$ 0.021}  \\ 
Test Performance         & 0.78 $\pm$ 0.041 & \textbf{0.88 $\pm$ 0.036} & \textbf{0.89 $\pm$ 0.033} \\ \bottomrule
\end{tabular}
\end{table}

\newpage
\section{Repeated proofs}
\subsection{Theorem 3 from \citet{maran_tight_2023}}
\label{app:proof_maran}
Here we repeat the proof for Theorem 3 in \citet{maran_tight_2023}. We mostly change the notation to be consistent with our paper. Note that in \citet{maran_tight_2023} they consider MDPs rather than CMDPs, but since any CMDP (including the testing CMDP ${\mathcal{M|}_{C_{test}}}$ in a ZSPT setting) is just a special instance of an MDP, the theorem readily applies. 

We first introduce a number of definitions and assumptions used in our derivations 
\begin{definition}
    We will call a function $f$ L-Lipschitz continuous if for two metric sets $(X, d_x)$, $(Y, d_y)$, where $d_x, d_y$ are distance metrics, we have
    \begin{align*}
        \forall x_1, x_2 \in X, \quad d_y(f(x_1), f(x_2)) \le L d_x(x_1, x_2).
    \end{align*}
\end{definition}

\begin{definition}
    We define $\|f\|_L$ to be the Lipschitz semi-norm of $f$, with
    \begin{align*}
        \|f\|_L = \sup_{x_1, x_2 \in X, x_1 \ne x_2} \frac{d_y(f(x_1), f(x_2))}{d_x(x_1, x_2)}.
    \end{align*}    
\end{definition}

\begin{definition}
We introduce the Wasserstein distance between probability distributions $p$ and $q$ as
    \begin{align*}
        \mathcal{W}(p, q) = \sup_{||f||_L \le 1} \left| \int f dp - \int f dq \right|.
    \end{align*}
\end{definition}

\paragraph{Assumption 1.} We assume an $(L_T, L_R)$-Lipschitz continuous MDP with a metric state and action space and associated distances $d_s (\equiv d_S)$ and $d_a (\equiv d_A)$, for which we have
\begin{align*}
    \mathcal{W}(T(\cdot|s,a), T(\cdot|\hat{s},\hat{a})) &\le L_T (d_S(s,\hat{s}) + d_A(a,\hat{a})), & \forall (s,a), (\hat{s},\hat{a}) \in S \times A \\
    |R(s,a) - R(\hat{s},\hat{a})| &\le L_R (d_S(s,\hat{s}) + d_A(a,\hat{a})), & \forall (s,a), (\hat{s},\hat{a}) \in S \times A.
\end{align*}

\paragraph{Assumption 2.} We assume $L_\pi$-Lipschitz continuous policies, which satisfy
\begin{align*}
    \mathcal{W}(\pi(\cdot|s), \pi(\cdot|s')) \le L_\pi d_S(s,s')  & \quad \forall s,\hat{s} \in S.
\end{align*}
Note that this definition subsumes deterministic policies.

With this, we can state the theorem in question \citep{maran_tight_2023}
\ThmMaran*
\begin{proof}
    From assumptions 1 and 2, it follows that if $\gamma L_p(1+L_\pi) < 1$ the value function $Q^\pi$ associated with $\pi$ is $L_{Q^\pi}$-Lipschitz continuous \citep{rachelson_locality_2010} with 
    \begin{align*}
        L_{Q^\pi} \le \frac{L_R}{1-\gamma L_T(1+L_\pi)}    
    \end{align*}
    
    The first part of our proof derives performance differences under Lipschitz value functions. We begin with the performance difference Theorem \citep{kakade_approximately_2002} stating
    \begin{align*}
    J^{\pi_1} - J^{\pi_2} = \frac{1}{1-\gamma} \mathbb{E}_{s \sim d^{\pi_1}} \left[ \mathbb{E}_{a \sim \pi_1(\cdot|s)} [ Q^{\pi_2}(s,a) - V^{\pi_2}(s)] \right]
    \end{align*}
    where $V^{\pi_2}(s)$ is the state value function. Focusing on the inner expectation we have
    \begin{align*}
    \mathbb{E}_{a \sim \pi_1(\cdot|s)} [Q^{\pi_2}(s,a) - V^{\pi_2}(s)] &= \int_{A} (Q^{\pi_2}(s,a) - V^{\pi_2}(s)) \pi_1(da|s) \\
    &= \int_{A} Q^{\pi_2}(s,a) \pi_1(da|s) - V^{\pi_2}(s) \\
    &= \int_{A} Q^{\pi_2}(s,a) (\pi_1(da|s) - \pi_2(da|s)).
    \end{align*}
    Now, let $L_s = ||Q^{\pi_2}(s, \cdot)||_L$ be the Lipschitz semi-norm of $Q^{\pi_2}(s,a)$ w.r.t. $a$ and define $g_s(a) = Q^{\pi_2}(s,a) / L_s$ with the property $\|g_s\|_L = 1$. This yields
    \begin{align*}
    \mathbb{E}_{a \sim \pi_1(\cdot|s)} [Q^{\pi_2}(s,a) - V^{\pi_2}(s)] &= \int_{A} Q^{\pi_2}(s,a) (\pi_1(da|s) - \pi_2(da|s)) \\
    &= \int_{A} g_s(a) L_s (\pi_1(da|s) - \pi_2(da|s)) \\
    &= L_s \int_{A} g_s(a) (\pi_1(da|s) - \pi_2(da|s))
    \end{align*}
    By definition of the Wasserstein distance
    \begin{align*}
        \mathcal{W}(\pi_1(\cdot|s), \pi_2(\cdot|s)) = \sup_{||g||_L \le 1} \left| \int_{A} g(a) (\pi_1(da|s) - \pi_2(da|s)) \right|,  
    \end{align*}
    such that we have
    \begin{align*}
    \left| \mathbb{E}_{a \sim \pi_1(\cdot|s)} [Q^{\pi_2}(s,a) - V^{\pi_2}(s)] \right| &= \left| L_s \int_{A} g_s(a) (\pi_1(da|s) - \pi_2(da|s)) \right|\\
    &\le L_s \sup_{\|g\|_L \le 1} \left| \int_{A} g(a) (\pi_1(da|s) - \pi_2(da|s)) \right| \\
    &= L_s \mathcal{W}(\pi_1(\cdot|s), \pi_2(\cdot|s)).
    \end{align*}
    Now, we recall $L_s = \|Q^{\pi_2}(s,\cdot)\|_L$ and by our assumptions $Q^{\pi_2}$ is $L_{Q^{\pi_2}}$-Lipschitz continuous such that
    \begin{align*}
    L_s &\leq \sup_{s \in S} ||Q^{\pi_2}(s,\cdot)||_L \\
    &\leq \|Q^{\pi_2}\|_L \\
    &\leq L_{Q^{\pi_2}}.
    \end{align*}
    Putting these results together, we can obtain
    \begin{align*}
        J^{\pi_1} - J^{\pi_2} \leq \frac{L_{Q^{\pi_2}}}{1-\gamma} \mathbb{E}_{s \sim d^{\pi_1}} [\mathcal{W}(\pi_1(\cdot|s), \pi_2(\cdot|s))]
    \end{align*}
    After setting $\pi_1 = \pi^*$ and $\pi_2 = \hat{\pi}_\infty$ and using that $L_{Q^{\hat{\pi}_\infty}} \le \frac{L_R}{1-\gamma L_T(1+L_{\hat{\pi}_\infty})}$, we have
    \begin{align*}
        J^{\pi^*} - J^{\hat{\pi}_\infty} \leq \frac{L_{R}}{(1-\gamma) (1 - \gamma L_T(1+L_{\hat{\pi}_\infty}))} \mathbb{E}_{s \sim d^{\pi^*}} [\mathcal{W}(\pi^*(\cdot|s), \hat{\pi}_\infty(\cdot|s))]
    \end{align*}
\end{proof}

\subsection{Lemma 6.2 from \citet{gerken_emergent_2024}}
\label{app:proof_invariance}
Here we repeat the proof for Lemma 6.2 in \citet{gerken_emergent_2024} in the notation used in this paper. 

\LemmaSubgroups*
\begin{proof}
    Lets denote a set of states with $\mathcal{D} = \{ s_i \}_{i=1}^n$ and a training dataset $\mathcal{T} = \{ (s_i, y_i) | \forall s_i \in \mathcal{D}, y_i \in \mathcal{Y} \}$ where $y_i \in \mathcal{Y}$ indicates the target for sample $s_i$ (for example, $y_i = \pi_\beta(s_i)$ for distillation with respect to a teacher $\pi_\beta$). 
    Using the definition of the measure of discrepancy $\kappa$ and the property of full data augmentation with a finite group from \eqref{eq:perm}, we can write for any training sample $(s_j, y_j) \in \mathcal{T}_{B} = (\mathcal{D}_B, \mathcal{Y}_B) = \{ (\psi_S(b)s, y) | \forall (s,y) \in \mathcal{T}, b \in B\}$ and any $g \in G$ and $b \in B$:
    \begin{equation*}
        ||\psi_S(g) s_j - s_{\mathtt{p}_{b}(j)}|| = ||\psi_S(g) s_j - \psi_S(b) s_j|| \le ||\psi_S(g) - \psi_S(b)||_{op} ||s_j|| < \kappa ||s_j|| \: .
    \end{equation*}
     Additionally, we can use the definition of the mean of the NNGP $m_t$ to write for any $s \in S$
     \begin{align*}
         |\bar{\pi}_t (s) - \bar{\pi}_t \big( \psi_S(g)s \big)| &=  |m_t (s) - m_t \big( \psi_S(g)s \big)| \\
         &= |(\Theta(s, \mathcal{D}_B) - \Theta(\psi_S(g)s, \mathcal{D}_B)) \Theta^{-1} (\mathbb{I} - \exp(-\eta\Theta t)) \mathcal{Y}_B| \: .
     \end{align*}
     We can use Lemma 5.2 from \citet{gerken_emergent_2024}, made specific for scalar- and vector-valued functions:
     \begin{namedtheorem}[Lemma 5.2]
        For scalar- and vector-valued functions, data augmentation implies that the permutation $\Pi_g$ commutes with any matrix-valued analytical function F involving the NNGP kernel $\mathcal{K}$, the NTK $\Theta$ and their inverses:
        \begin{equation*}
            \Pi(g) F(\Theta, \Theta^{-1}, \mathcal{K}, \mathcal{K}^{-1}) = F(\Theta, \Theta^{-1}, \mathcal{K}, \mathcal{K}^{-1}) \Pi(g)
        \end{equation*}
        where $\Pi(g)$ denotes the permutation matrix applying the permutation $\mathtt{p}_g$ associated with $g$ to each training point.
     \end{namedtheorem}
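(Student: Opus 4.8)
The plan is to reduce the statement to a single elementary fact—that $\Pi(g)$ commutes with each of the two base matrices $\Theta$ and $\mathcal{K}$ individually—and then lift this to arbitrary analytic functions by a purely algebraic argument. Throughout I take $g \in B$, so that full data augmentation with the finite subgroup $B$ guarantees $\psi_S(g)$ maps the augmented dataset $\mathcal{D}_B$ onto itself (as $bB = B$); the index relabelling $\mathtt{p}_g$ of \eqref{eq:perm} is then a well-defined bijection of the training points, and $\Pi(g)$ is the corresponding (orthogonal) permutation matrix.

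First I would establish the base case $\Pi(g)\Theta = \Theta\Pi(g)$, which rests on two ingredients. The first is the augmentation identity $\psi_S(g)s_i = s_{\mathtt{p}_g(i)}$ from \eqref{eq:perm}, stating that acting with $g$ merely relabels training points. The second is the invariance of the NTK and NNGP kernels under the orthogonal representation, $\Theta(\psi_S(g)s, \psi_S(g)s') = \Theta(s,s')$ and likewise for $\mathcal{K}$; for standard fully-connected architectures these kernels are built recursively from the input inner products $\langle s, s'\rangle$ and norms, all preserved because $\psi_S(g)$ is orthogonal. Combining the two gives
\begin{equation*}
\Theta_{\mathtt{p}_g(i),\mathtt{p}_g(j)} = \Theta(\psi_S(g)s_i, \psi_S(g)s_j) = \Theta(s_i, s_j) = \Theta_{ij},
\end{equation*}
which is exactly the entrywise form of $\Pi(g)\Theta\Pi(g)^{-1} = \Theta$, i.e. $\Pi(g)\Theta = \Theta\Pi(g)$; the identical argument yields $\Pi(g)\mathcal{K} = \mathcal{K}\Pi(g)$.

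It then remains to propagate commutation through inverses and analytic combinations. Commutation with $\Theta$ passes to $\Theta^{-1}$ immediately: multiplying $\Pi(g)\Theta = \Theta\Pi(g)$ on both sides by $\Theta^{-1}$ gives $\Theta^{-1}\Pi(g) = \Pi(g)\Theta^{-1}$, and similarly for $\mathcal{K}^{-1}$. The set of matrices commuting with the fixed matrix $\Pi(g)$—its commutant—is a subalgebra of the matrix algebra closed under addition, scalar multiplication, matrix products, and topological limits. It therefore contains every polynomial in $\Theta, \Theta^{-1}, \mathcal{K}, \mathcal{K}^{-1}$, and, since an analytic matrix function is the limit of the partial sums of a convergent power series, it contains $F(\Theta,\Theta^{-1},\mathcal{K},\mathcal{K}^{-1})$ as well. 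Hence $\Pi(g)F = F\Pi(g)$, which is the claim.

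The main obstacle is the kernel-invariance ingredient $\Theta(\psi_S(g)s,\psi_S(g)s')=\Theta(s,s')$: this is where orthogonality of the representation is genuinely used, and where the restriction to scalar- and vector-valued functions matters. For scalar or vector outputs the kernels are ordinary matrices indexed by data points, so $\Pi(g)$ acts cleanly by permuting those indices and the inner-product dependence of the NTK and NNGP delivers invariance directly; for richer output structures the kernels acquire extra tensor indices and one would additionally have to check that $\Pi(g)$ commutes with that structure. I would make the recursive, inner-product form of the kernels explicit to secure this step, treating the algebraic lifting of the previous paragraph as routine.
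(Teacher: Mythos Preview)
Your argument is correct and is essentially the standard route: establish $\Pi(g)\Theta\Pi(g)^{-1}=\Theta$ and $\Pi(g)\mathcal{K}\Pi(g)^{-1}=\mathcal{K}$ from kernel invariance under the orthogonal representation together with the augmentation relabelling \eqref{eq:perm}, then pass to inverses and analytic combinations by the commutant-algebra argument. Note, however, that the present paper does not itself supply a proof of this lemma; it merely quotes it from \citet{gerken_emergent_2024} and uses it inside the proof of Lemma~6.2. So there is no paper-side proof to compare against here, but your derivation matches the natural one and would serve as a self-contained justification. The one point worth tightening is the kernel-invariance step $\Theta(\psi_S(g)s,\psi_S(g)s')=\Theta(s,s')$: you correctly flag it as the crux, and it does hold for the architectures in play because both the NNGP and NTK recursions depend on inputs only through $\langle s,s'\rangle$, $\|s\|$, $\|s'\|$, all preserved by orthogonal $\psi_S(g)$; stating that recursion explicitly would remove any residual doubt.
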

     to show that:
     \begin{align*}
         \Theta(s, \mathcal{D}_B) \Theta^{-1} (\mathbb{I} - \exp(-\eta\Theta t)) \mathcal{Y}_B &= \Theta(s, s_i) \Theta_{ij}^{-1} (\mathbb{I} - \exp(-\eta\Theta t))_{jk} y_k  \\
         &= \Theta(s, s_i) \Theta_{ij}^{-1} (\mathbb{I} - \exp(-\eta\Theta t))_{jk} y_{\mathtt{p}_b(k)}  \\
         &= \Theta(s, s_{\mathtt{p}_{b}^{-1}(i)}) \Theta_{ij}^{-1} (\mathbb{I} - \exp(-\eta\Theta t))_{jk} y_{k} 
     \end{align*}
     where we also used invariance of the labels: $\Pi(b)\mathcal{Y}_B = \mathcal{Y}_B$.
     Plugging this into the expression above:
     \begin{align*}
         |\bar{\pi}_t (s) - \bar{\pi}_t \big( \psi_S(g)s \big)| &=  | (\Theta(s, s_{\mathtt{p}_{b}^{-1}(i)}) - \Theta(\psi_S(g)s, s_i))  \Theta_{ij}^{-1} (\mathbb{I} - \exp(-\eta\Theta t))_{jk} y_{k} | \\
         &= | (\Theta(s, s_{\mathtt{p}_{b}^{-1}(i)}) - \Theta(s, \psi_S^{-1}(g)s_i))  \Theta_{ij}^{-1} (\mathbb{I} - \exp(-\eta\Theta t))_{jk} y_{k} |
     \end{align*}
     Now, we bound the following:
     \begin{align*}
         \Delta \Theta(s', s, \bar{s}) &= | \Theta(s',s) - \Theta(s',\bar{s})|   \\
         &= \bigg| \sum_{l=1}^L \mathbb{E}_{\theta \sim \mu} \bigg[ \bigg( \frac{\partial \pi_\theta(s')}{\partial \theta^{(l)}} \bigg)^\top \bigg( \frac{\partial \pi_\theta(s)}{\partial \theta^{(l)}} - \frac{\partial \pi_\theta(\bar{s})}{\partial \theta^{(l)}}\bigg) \bigg] \bigg| \\
         & \le ||s - \bar{s}|| \sum_{l=1}^L \mathbb{E}_{\theta \sim \mu} \bigg[ \bigg| \bigg( \frac{\partial \pi_\theta(s')}{\partial \theta^{(l)}} \bigg)^\top \cdot L(\theta^{(l)}) \bigg| \bigg] \\
         &= ||s - \bar{s} || \hat{C}(s')
     \end{align*}
     where $L(\theta^{(l)})$ is the Lipschitz constant of $\partial_{\theta^{(l)}}\pi_\theta$. Finally, using the triangle inequality:
     \begin{align*}
         |\bar{\pi}_t (s) - \bar{\pi}_t \big( \psi_S(g)s \big)| &\le \hat{C}(s) \sqrt{\sum_i ||s_{\mathtt{p}_{b}^{-1}(i)} -  \psi_S(g)s ||^2} \sqrt{\sum_i (\sum_{j,k} \Theta_{ij}^{-1} (\mathbb{I} - \exp(-\eta\Theta t))_{jk}y_k)^2} \\
         &\le \kappa \hat{C}(s) \sqrt{\sum_i ||s_i||^2} \sqrt{\sum_i (\sum_{j,k} \Theta_{ij}^{-1} (\mathbb{I} - \exp(-\eta\Theta t))_{jk}y_k)^2} = \kappa C_\Theta(s)
     \end{align*}
     
\end{proof}

\subsection{Lemma B.4 from \citet{gerken_emergent_2024}}
\label{app:proof_finite}
Here we repeat the proof for Lemma B.4  in \citet{gerken_emergent_2024} in the notation used in this paper. 

\LemmaMC*
\begin{proof}
    In the infinite width limit, the ensemble members for our Monte-Carlo estimator $\hat{\pi}_t$ are i.i.d. random variables drawn from a Gaussian distribution with mean $\bar{\pi}_t$ and variance $\Sigma_t$. Therefore, the probability of deviation for a given threshold $\delta$ is given by
    \begin{equation*}
        \mathbb{P} \big[  |\bar{\pi}_t(s) -  \hat{\pi}_t(s)| > \delta \big] = \frac{2}{\sqrt{2\pi}\sigma_s} \int_\delta^\infty \exp \bigg( - \frac{x^2}{2\sigma_s^2} \bigg) \text{d}x \: ,
    \end{equation*}
    where $\sigma_s^2 = \frac{\Sigma_t(s)}{N}$ is the variance of the Monte-Carlo estimator $\hat{\pi}_t$. 
    With a change of integration variable $u = \frac{t}{\sigma_s\sqrt{2}}$, and using the fact that $1 \le \frac{2u}{2 \min(u)}$ for $u \ge \min(u)$, we get:
    \begin{equation*}
        \mathbb{P} \big[  |\bar{\pi}_t(s) -  \hat{\pi}_t(s)| > \delta \big] = \frac{2}{\sqrt{\pi}} \int_{\frac{\delta}{\sqrt{2} \sigma_s}}^\infty \exp \bigg( -u^2 \bigg) \text{d}u \le \frac{1}{\sqrt{\pi}} \frac{\sqrt{2} \sigma_s}{\delta} \int_{\frac{\delta}{\sqrt{2} \sigma_s}}^\infty (2u) \exp \bigg( -u^2 \bigg) \text{d}u \: .
    \end{equation*}
    Finally, this integral evaluates to 
    \begin{equation*}
        \mathbb{P} \big[  |\bar{\pi}_t(s) -  \hat{\pi}_t(s)| > \delta \big] \le \sqrt{\frac{2}{\pi}} \frac{\sigma_s}{\delta} \exp \bigg( - \frac{\delta^2}{2 \sigma_s^2} \bigg)
    \end{equation*}
\end{proof}

\newpage
\section*{NeurIPS Paper Checklist}
\begin{enumerate}

\item {\bf Claims}
    \item[] Question: Do the main claims made in the abstract and introduction accurately reflect the paper's contributions and scope?
    \item[] Answer: \answerYes %
    \item[] Justification: The claims are supported by the theoretical and experimental contributions in sections \ref{sec:theory} and \ref{sec:experiments}.
    \item[] Guidelines:
    \begin{itemize}
        \item The answer NA means that the abstract and introduction do not include the claims made in the paper.
        \item The abstract and/or introduction should clearly state the claims made, including the contributions made in the paper and important assumptions and limitations. A No or NA answer to this question will not be perceived well by the reviewers. 
        \item The claims made should match theoretical and experimental results, and reflect how much the results can be expected to generalize to other settings. 
        \item It is fine to include aspirational goals as motivation as long as it is clear that these goals are not attained by the paper. 
    \end{itemize}

\item {\bf Limitations}
    \item[] Question: Does the paper discuss the limitations of the work performed by the authors?
    \item[] Answer: \answerYes{} %
    \item[] Justification: The limitations of the theoretical assumptions are investigated and discussed in Section \ref{sec:limitations}. 
    \item[] Guidelines:
    \begin{itemize}
        \item The answer NA means that the paper has no limitation while the answer No means that the paper has limitations, but those are not discussed in the paper. 
        \item The authors are encouraged to create a separate "Limitations" section in their paper.
        \item The paper should point out any strong assumptions and how robust the results are to violations of these assumptions (e.g., independence assumptions, noiseless settings, model well-specification, asymptotic approximations only holding locally). The authors should reflect on how these assumptions might be violated in practice and what the implications would be.
        \item The authors should reflect on the scope of the claims made, e.g., if the approach was only tested on a few datasets or with a few runs. In general, empirical results often depend on implicit assumptions, which should be articulated.
        \item The authors should reflect on the factors that influence the performance of the approach. For example, a facial recognition algorithm may perform poorly when image resolution is low or images are taken in low lighting. Or a speech-to-text system might not be used reliably to provide closed captions for online lectures because it fails to handle technical jargon.
        \item The authors should discuss the computational efficiency of the proposed algorithms and how they scale with dataset size.
        \item If applicable, the authors should discuss possible limitations of their approach to address problems of privacy and fairness.
        \item While the authors might fear that complete honesty about limitations might be used by reviewers as grounds for rejection, a worse outcome might be that reviewers discover limitations that aren't acknowledged in the paper. The authors should use their best judgment and recognize that individual actions in favor of transparency play an important role in developing norms that preserve the integrity of the community. Reviewers will be specifically instructed to not penalize honesty concerning limitations.
    \end{itemize}

\item {\bf Theory assumptions and proofs}
    \item[] Question: For each theoretical result, does the paper provide the full set of assumptions and a complete (and correct) proof?
    \item[] Answer: \answerYes{} %
    \item[] Justification: The assumptions are clearly listed in Section \ref{sec:theory} and the proof can be found in Appendix \ref{app:proof_main}.
    \item[] Guidelines:
    \begin{itemize}
        \item The answer NA means that the paper does not include theoretical results. 
        \item All the theorems, formulas, and proofs in the paper should be numbered and cross-referenced.
        \item All assumptions should be clearly stated or referenced in the statement of any theorems.
        \item The proofs can either appear in the main paper or the supplemental material, but if they appear in the supplemental material, the authors are encouraged to provide a short proof sketch to provide intuition. 
        \item Inversely, any informal proof provided in the core of the paper should be complemented by formal proofs provided in appendix or supplemental material.
        \item Theorems and Lemmas that the proof relies upon should be properly referenced. 
    \end{itemize}

    \item {\bf Experimental result reproducibility}
    \item[] Question: Does the paper fully disclose all the information needed to reproduce the main experimental results of the paper to the extent that it affects the main claims and/or conclusions of the paper (regardless of whether the code and data are provided or not)?
    \item[] Answer: \answerYes{} %
    \item[] Justification: The experimental details can be found in Appendix \ref{app:experimental-details} including a link to the code.
    \item[] Guidelines:
    \begin{itemize}
        \item The answer NA means that the paper does not include experiments.
        \item If the paper includes experiments, a No answer to this question will not be perceived well by the reviewers: Making the paper reproducible is important, regardless of whether the code and data are provided or not.
        \item If the contribution is a dataset and/or model, the authors should describe the steps taken to make their results reproducible or verifiable. 
        \item Depending on the contribution, reproducibility can be accomplished in various ways. For example, if the contribution is a novel architecture, describing the architecture fully might suffice, or if the contribution is a specific model and empirical evaluation, it may be necessary to either make it possible for others to replicate the model with the same dataset, or provide access to the model. In general. releasing code and data is often one good way to accomplish this, but reproducibility can also be provided via detailed instructions for how to replicate the results, access to a hosted model (e.g., in the case of a large language model), releasing of a model checkpoint, or other means that are appropriate to the research performed.
        \item While NeurIPS does not require releasing code, the conference does require all submissions to provide some reasonable avenue for reproducibility, which may depend on the nature of the contribution. For example
        \begin{enumerate}
            \item If the contribution is primarily a new algorithm, the paper should make it clear how to reproduce that algorithm.
            \item If the contribution is primarily a new model architecture, the paper should describe the architecture clearly and fully.
            \item If the contribution is a new model (e.g., a large language model), then there should either be a way to access this model for reproducing the results or a way to reproduce the model (e.g., with an open-source dataset or instructions for how to construct the dataset).
            \item We recognize that reproducibility may be tricky in some cases, in which case authors are welcome to describe the particular way they provide for reproducibility. In the case of closed-source models, it may be that access to the model is limited in some way (e.g., to registered users), but it should be possible for other researchers to have some path to reproducing or verifying the results.
        \end{enumerate}
    \end{itemize}

\item {\bf Open access to data and code}
    \item[] Question: Does the paper provide open access to the data and code, with sufficient instructions to faithfully reproduce the main experimental results, as described in supplemental material?
    \item[] Answer: \answerYes{} %
    \item[] Justification: Code can be found at \url{https://github.com/MWeltevrede/distillation-after-training}. 
    \item[] Guidelines:
    \begin{itemize}
        \item The answer NA means that paper does not include experiments requiring code.
        \item Please see the NeurIPS code and data submission guidelines (\url{https://nips.cc/public/guides/CodeSubmissionPolicy}) for more details.
        \item While we encourage the release of code and data, we understand that this might not be possible, so “No” is an acceptable answer. Papers cannot be rejected simply for not including code, unless this is central to the contribution (e.g., for a new open-source benchmark).
        \item The instructions should contain the exact command and environment needed to run to reproduce the results. See the NeurIPS code and data submission guidelines (\url{https://nips.cc/public/guides/CodeSubmissionPolicy}) for more details.
        \item The authors should provide instructions on data access and preparation, including how to access the raw data, preprocessed data, intermediate data, and generated data, etc.
        \item The authors should provide scripts to reproduce all experimental results for the new proposed method and baselines. If only a subset of experiments are reproducible, they should state which ones are omitted from the script and why.
        \item At submission time, to preserve anonymity, the authors should release anonymized versions (if applicable).
        \item Providing as much information as possible in supplemental material (appended to the paper) is recommended, but including URLs to data and code is permitted.
    \end{itemize}

\item {\bf Experimental setting/details}
    \item[] Question: Does the paper specify all the training and test details (e.g., data splits, hyperparameters, how they were chosen, type of optimizer, etc.) necessary to understand the results?
    \item[] Answer: \answerYes{} %
    \item[] Justification: The experimental details can be found in Appendix \ref{app:experimental-details}
    \item[] Guidelines:
    \begin{itemize}
        \item The answer NA means that the paper does not include experiments.
        \item The experimental setting should be presented in the core of the paper to a level of detail that is necessary to appreciate the results and make sense of them.
        \item The full details can be provided either with the code, in appendix, or as supplemental material.
    \end{itemize}

\item {\bf Experiment statistical significance}
    \item[] Question: Does the paper report error bars suitably and correctly defined or other appropriate information about the statistical significance of the experiments?
    \item[] Answer: \answerYes{} %
    \item[] Justification: For all results, standard deviation is reported and their significance is evaluated by checking for overlapping 95\% confidence intervals.  
    \item[] Guidelines:
    \begin{itemize}
        \item The answer NA means that the paper does not include experiments.
        \item The authors should answer "Yes" if the results are accompanied by error bars, confidence intervals, or statistical significance tests, at least for the experiments that support the main claims of the paper.
        \item The factors of variability that the error bars are capturing should be clearly stated (for example, train/test split, initialization, random drawing of some parameter, or overall run with given experimental conditions).
        \item The method for calculating the error bars should be explained (closed form formula, call to a library function, bootstrap, etc.)
        \item The assumptions made should be given (e.g., Normally distributed errors).
        \item It should be clear whether the error bar is the standard deviation or the standard error of the mean.
        \item It is OK to report 1-sigma error bars, but one should state it. The authors should preferably report a 2-sigma error bar than state that they have a 96\% CI, if the hypothesis of Normality of errors is not verified.
        \item For asymmetric distributions, the authors should be careful not to show in tables or figures symmetric error bars that would yield results that are out of range (e.g. negative error rates).
        \item If error bars are reported in tables or plots, The authors should explain in the text how they were calculated and reference the corresponding figures or tables in the text.
    \end{itemize}

\item {\bf Experiments compute resources}
    \item[] Question: For each experiment, does the paper provide sufficient information on the computer resources (type of compute workers, memory, time of execution) needed to reproduce the experiments?
    \item[] Answer: \answerYes{} %
    \item[] Justification: The amount of compute required is mentioned in Appendix \ref{app:experimental-details}.
    \item[] Guidelines:
    \begin{itemize}
        \item The answer NA means that the paper does not include experiments.
        \item The paper should indicate the type of compute workers CPU or GPU, internal cluster, or cloud provider, including relevant memory and storage.
        \item The paper should provide the amount of compute required for each of the individual experimental runs as well as estimate the total compute. 
        \item The paper should disclose whether the full research project required more compute than the experiments reported in the paper (e.g., preliminary or failed experiments that didn't make it into the paper). 
    \end{itemize}
    
\item {\bf Code of ethics}
    \item[] Question: Does the research conducted in the paper conform, in every respect, with the NeurIPS Code of Ethics \url{https://neurips.cc/public/EthicsGuidelines}?
    \item[] Answer: \answerYes{} %
    \item[] Justification: We conform to the NeurIPS Code of Ethics.
    \item[] Guidelines:
    \begin{itemize}
        \item The answer NA means that the authors have not reviewed the NeurIPS Code of Ethics.
        \item If the authors answer No, they should explain the special circumstances that require a deviation from the Code of Ethics.
        \item The authors should make sure to preserve anonymity (e.g., if there is a special consideration due to laws or regulations in their jurisdiction).
    \end{itemize}

\item {\bf Broader impacts}
    \item[] Question: Does the paper discuss both potential positive societal impacts and negative societal impacts of the work performed?
    \item[] Answer: \answerNA{} %
    \item[] Justification: The paper concerns fundamental research for which it is difficult to reason about any societal impacts. 
    \item[] Guidelines:
    \begin{itemize}
        \item The answer NA means that there is no societal impact of the work performed.
        \item If the authors answer NA or No, they should explain why their work has no societal impact or why the paper does not address societal impact.
        \item Examples of negative societal impacts include potential malicious or unintended uses (e.g., disinformation, generating fake profiles, surveillance), fairness considerations (e.g., deployment of technologies that could make decisions that unfairly impact specific groups), privacy considerations, and security considerations.
        \item The conference expects that many papers will be foundational research and not tied to particular applications, let alone deployments. However, if there is a direct path to any negative applications, the authors should point it out. For example, it is legitimate to point out that an improvement in the quality of generative models could be used to generate deepfakes for disinformation. On the other hand, it is not needed to point out that a generic algorithm for optimizing neural networks could enable people to train models that generate Deepfakes faster.
        \item The authors should consider possible harms that could arise when the technology is being used as intended and functioning correctly, harms that could arise when the technology is being used as intended but gives incorrect results, and harms following from (intentional or unintentional) misuse of the technology.
        \item If there are negative societal impacts, the authors could also discuss possible mitigation strategies (e.g., gated release of models, providing defenses in addition to attacks, mechanisms for monitoring misuse, mechanisms to monitor how a system learns from feedback over time, improving the efficiency and accessibility of ML).
    \end{itemize}
    
\item {\bf Safeguards}
    \item[] Question: Does the paper describe safeguards that have been put in place for responsible release of data or models that have a high risk for misuse (e.g., pretrained language models, image generators, or scraped datasets)?
    \item[] Answer: \answerNA{} %
    \item[] Justification: We do not have data or models that have a high risk for misuse. 
    \item[] Guidelines:
    \begin{itemize}
        \item The answer NA means that the paper poses no such risks.
        \item Released models that have a high risk for misuse or dual-use should be released with necessary safeguards to allow for controlled use of the model, for example by requiring that users adhere to usage guidelines or restrictions to access the model or implementing safety filters. 
        \item Datasets that have been scraped from the Internet could pose safety risks. The authors should describe how they avoided releasing unsafe images.
        \item We recognize that providing effective safeguards is challenging, and many papers do not require this, but we encourage authors to take this into account and make a best faith effort.
    \end{itemize}

\item {\bf Licenses for existing assets}
    \item[] Question: Are the creators or original owners of assets (e.g., code, data, models), used in the paper, properly credited and are the license and terms of use explicitly mentioned and properly respected?
    \item[] Answer: \answerYes{} %
    \item[] Justification: Any original creators or owners are properly credited and licences are respected.
    \item[] Guidelines:
    \begin{itemize}
        \item The answer NA means that the paper does not use existing assets.
        \item The authors should cite the original paper that produced the code package or dataset.
        \item The authors should state which version of the asset is used and, if possible, include a URL.
        \item The name of the license (e.g., CC-BY 4.0) should be included for each asset.
        \item For scraped data from a particular source (e.g., website), the copyright and terms of service of that source should be provided.
        \item If assets are released, the license, copyright information, and terms of use in the package should be provided. For popular datasets, \url{paperswithcode.com/datasets} has curated licenses for some datasets. Their licensing guide can help determine the license of a dataset.
        \item For existing datasets that are re-packaged, both the original license and the license of the derived asset (if it has changed) should be provided.
        \item If this information is not available online, the authors are encouraged to reach out to the asset's creators.
    \end{itemize}

\item {\bf New assets}
    \item[] Question: Are new assets introduced in the paper well documented and is the documentation provided alongside the assets?
    \item[] Answer: \answerYes{} %
    \item[] Justification: Code for the experiments is released with the camera ready version. 
    \item[] Guidelines:
    \begin{itemize}
        \item The answer NA means that the paper does not release new assets.
        \item Researchers should communicate the details of the dataset/code/model as part of their submissions via structured templates. This includes details about training, license, limitations, etc. 
        \item The paper should discuss whether and how consent was obtained from people whose asset is used.
        \item At submission time, remember to anonymize your assets (if applicable). You can either create an anonymized URL or include an anonymized zip file.
    \end{itemize}

\item {\bf Crowdsourcing and research with human subjects}
    \item[] Question: For crowdsourcing experiments and research with human subjects, does the paper include the full text of instructions given to participants and screenshots, if applicable, as well as details about compensation (if any)? 
    \item[] Answer: \answerNA{} %
    \item[] Justification: We do not have any experiments with human subjects. 
    \item[] Guidelines:
    \begin{itemize}
        \item The answer NA means that the paper does not involve crowdsourcing nor research with human subjects.
        \item Including this information in the supplemental material is fine, but if the main contribution of the paper involves human subjects, then as much detail as possible should be included in the main paper. 
        \item According to the NeurIPS Code of Ethics, workers involved in data collection, curation, or other labor should be paid at least the minimum wage in the country of the data collector. 
    \end{itemize}

\item {\bf Institutional review board (IRB) approvals or equivalent for research with human subjects}
    \item[] Question: Does the paper describe potential risks incurred by study participants, whether such risks were disclosed to the subjects, and whether Institutional Review Board (IRB) approvals (or an equivalent approval/review based on the requirements of your country or institution) were obtained?
    \item[] Answer: \answerNA{} %
    \item[] Justification: The paper does not involve crowdsourcing nor research with human subjects.
    \item[] Guidelines:
    \begin{itemize}
        \item The answer NA means that the paper does not involve crowdsourcing nor research with human subjects.
        \item Depending on the country in which research is conducted, IRB approval (or equivalent) may be required for any human subjects research. If you obtained IRB approval, you should clearly state this in the paper. 
        \item We recognize that the procedures for this may vary significantly between institutions and locations, and we expect authors to adhere to the NeurIPS Code of Ethics and the guidelines for their institution. 
        \item For initial submissions, do not include any information that would break anonymity (if applicable), such as the institution conducting the review.
    \end{itemize}

\item {\bf Declaration of LLM usage}
    \item[] Question: Does the paper describe the usage of LLMs if it is an important, original, or non-standard component of the core methods in this research? Note that if the LLM is used only for writing, editing, or formatting purposes and does not impact the core methodology, scientific rigorousness, or originality of the research, declaration is not required.
    \item[] Answer: \answerNA{} %
    \item[] Justification: This paper does not involve LLMs as any important, original, or non-standard components. 
    \item[] Guidelines:
    \begin{itemize}
        \item The answer NA means that the core method development in this research does not involve LLMs as any important, original, or non-standard components.
        \item Please refer to our LLM policy (\url{https://neurips.cc/Conferences/2025/LLM}) for what should or should not be described.
    \end{itemize}

\end{enumerate}

\end{document}